\documentclass[10pt]{article} 
\usepackage[preprint]{tmlr}

\usepackage{amsmath,amsfonts,bm}

\def\eqref#1{equation~\ref{#1}}

\def\1{\bm{1}}

\DeclareMathAlphabet{\mathsfit}{\encodingdefault}{\sfdefault}{m}{sl}
\SetMathAlphabet{\mathsfit}{bold}{\encodingdefault}{\sfdefault}{bx}{n}

\usepackage{hyperref}
\usepackage{url}

\usepackage{amsmath}
\usepackage{amssymb}
\usepackage{amsfonts}
\usepackage{amsthm}
\usepackage{multirow}
\usepackage{booktabs}
\usepackage{adjustbox}  
\usepackage{array}
\usepackage{graphicx}
\usepackage{subcaption}
\usepackage{listings}
\usepackage{xcolor} 

\definecolor{codegreen}{rgb}{0,0.6,0}
\definecolor{codegray}{rgb}{0.5,0.5,0.5}
\definecolor{codepurple}{rgb}{0.58,0,0.82}
\definecolor{backcolour}{rgb}{0.96,0.96,0.96}

\lstdefinestyle{mypython}{
    backgroundcolor=\color{backcolour},  
    commentstyle=\color{codegreen},       
    keywordstyle=\color{magenta}\bfseries,
    numberstyle=\tiny\color{codegray},    
    stringstyle=\color{codepurple},       
    basicstyle=\ttfamily\footnotesize,   
    breakatwhitespace=false,         
    breaklines=true,                      
    captionpos=b,                         
    keepspaces=true,                 
    numbers=left,                         
    numbersep=5pt,                  
    showspaces=false,                
    showstringspaces=false,          
    showtabs=false,                  
    tabsize=4,                            
    language=Python                       
}

\usepackage{pifont}
\usepackage{enumitem}
\usepackage{mathrsfs}

\newtheorem{proposition}{Proposition}
\title{On the Global Photometric Alignment for Low-Level Vision}

\author{\name Mingjia Li\thanks{Equal contribution.}, \name Tianle Du\footnotemark[1], \name Hainuo Wang, \name Qiming Hu, \name Xiaojie Guo\thanks{Corresponding author.} \\
      \addr Tianjin University \\
      \email \{mingjiali, dutianle, hainuo, huqiming\}@tju.edu.cn, xj.max.guo@gmail.com
}

\def\month{MM}  
\def\year{YYYY} 
\def\openreview{\url{https://openreview.net/forum?id=XXXX}} 

\begin{document}

\maketitle

\begin{abstract}
Supervised low-level vision models rely on pixel-wise losses against paired references, yet paired training sets exhibit per-pair photometric inconsistency, say, different image pairs demand different global brightness, color, or white-balance mappings. This inconsistency enters through task-intrinsic photometric transfer (e.g., low-light enhancement) or unintended acquisition shifts (e.g., de-raining), and in either case causes an optimization pathology. Standard reconstruction losses allocate disproportionate gradient budget to conflicting per-pair photometric targets, crowding out content restoration. In this paper, we investigate this issue and prove that, under least-squares decomposition, the photometric and structural components of the prediction-target residual are orthogonal, and that the spatially dense photometric component dominates the gradient energy. Motivated by this analysis, we propose Photometric Alignment Loss (PAL). This flexible supervision objective discounts nuisance photometric discrepancy via closed-form affine color alignment while preserving restoration-relevant supervision, requiring only covariance statistics and tiny matrix inversion with negligible overhead. Across 6 tasks, 16 datasets, and 16 architectures, PAL consistently improves metrics and generalization. The implementation is in the appendix.
\end{abstract}

\section{Introduction}

Pixel-wise supervision underpins most state-of-the-art low-level vision models~\citep{xu2023low, yan2025hvi, shadowhack, mei2024latent, cai2016dehazenet, wang2025modem, 10.1007/978-3-031-72670-5_7}. By regressing network outputs toward paired reference~\footnote{In the paper, we follow the convention of calling reference images ``ground truth,'' though this term is technically inexact for enhancement tasks.} images, models learn complex mappings from degraded inputs to clean targets. Despite its success, this paradigm rests on an implicit assumption: that every pixel-level difference between prediction and target is equally worth fitting. In practice, the prediction-target residual often contains a substantial photometric component, namely global shifts in brightness, color, or white balance, that vary from pair to pair within the training set. We refer to this variation as \emph{per-pair photometric inconsistency}. Because standard reconstruction losses are disproportionately sensitive to global shifts~\citep{11095167}, the gradient signal is dominated by conflicting per-pair photometric targets at the expense of structural restoration.

Per-pair photometric inconsistency enters paired datasets through two distinct sources. The first is \emph{task-intrinsic}: in low-light enhancement~\citep{DBLP:journals/tip/GuoLL17, zhang2019kindling} and underwater enhancement~\citep{8917818}, the ground truth intentionally differs from the input in brightness and color, yet different pairs demand different photometric mappings depending on capture conditions and photographer intent. Standard pixel-wise losses allocate most of their gradient to this large but pair-specific photometric signal, leaving content restoration, say, contaminated texture and structure, underrepresented in the gradient.

\begin{figure}
  \includegraphics[width=\textwidth]{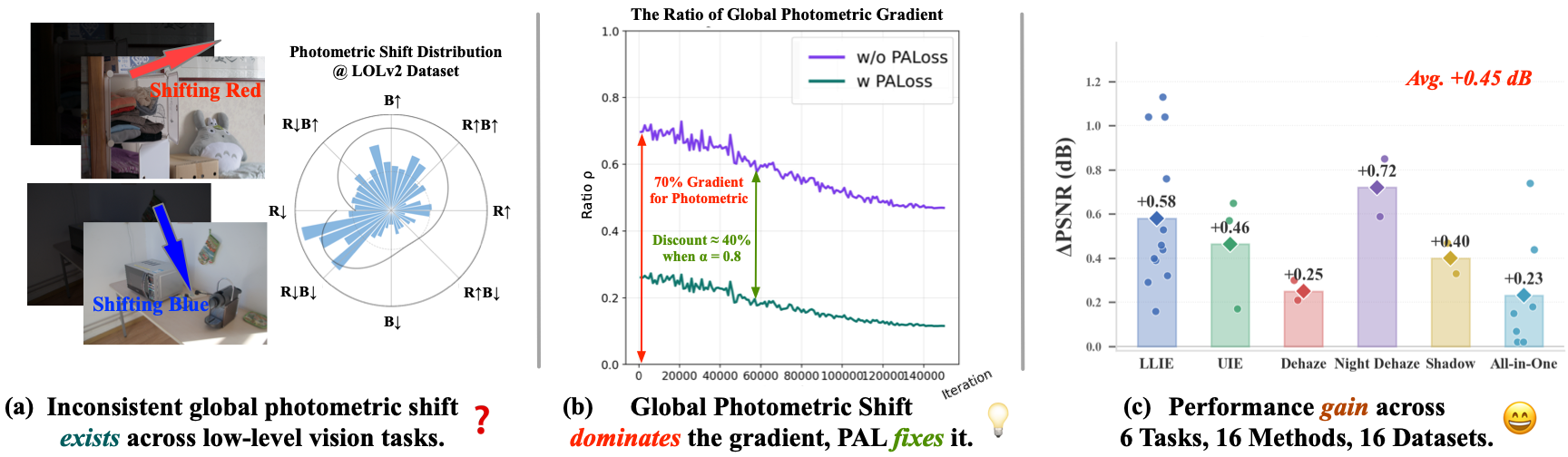}
  \caption{An Illustration of our work. (a) We identified that there exist inconsistent global photometric shift across the paired training datasets. (b) As the photometric shift dominates the gradient, learning texture and structure content from the training data becomes difficult. Our PAL helps rebalance the gradient. (c) Equipped with our PAL, the performance gained across 6 tasks, 16 methods, and 16 datasets, with an average gain in PSNR of 0.45dB across 6 tasks.}
  \label{fig:teaser}
\end{figure}

The second source is \emph{acquisition-induced}: in image dehazing~\citep{cai2016dehazenet} and deraining~\citep{10.1007/978-3-031-72670-5_7}, the restoration target should not differ photometrically from the desired content, yet paired-data acquisition introduces exposure, white-balance, or color-temperature variations that differ from pair to pair. Because different pairs deviate in random directions, the model receives contradictory supervision about whether and how to alter scene color, wasting capacity on what amounts to photometric label noise.

Although the two sources differ in origin and scale, they produce the same optimization pathology, \emph{i.e.}, the network exhausts its gradient budget resolving per-pair photometric conflicts rather than learning content restoration. The severity depends on the magnitude of the inconsistency and the dataset size, but the underlying mechanism is identical. Shadow removal~\citep{mei2024latent} illustrates this clearly: inside shadow regions, photometric transfer is intrinsic to the task, while outside shadow regions, residual acquisition deviation provides contradictory supervision. Both coexist within a single image as instances of the same per-pair inconsistency, motivating a general formulation rather than task-specific fixes.

Existing strategies mitigate the issue partially. Perceptual and adversarial losses~\citep{johnson2016perceptual, goodfellow2014generative} can be seen as implicit photometric robustness~\citep{zhang2018unreasonable} through deep feature matching, but at substantial computational cost and with only indirect supervision. Alternative color spaces~\citep{lore2017llnet, yan2025hvi, shadowhack, DBLP:journals/ijcv/GuoH23} are successful in many domains, improving the performance, but they are rather reorganizing the problem without eliminating it. Moreover, they are task-specific and typically require sophisticated co-design with modeling. A generalized mechanism that explicitly discounts photometric discrepancy from the supervision signal remains missing.

In this paper, we present Photometric Alignment Loss (PAL), a task‑agnostic approach for globally uniform photometric inconsistencies that addresses per-pair photometric inconsistency. An overview of our work can be found in Figure~\ref{fig:teaser}. PAL models the photometric discrepancy between prediction and target as a global affine color transformation and solves for it in closed form. The reconstruction loss is then computed on the aligned residual, so that restoration-relevant content can better drive the optimization. PAL can also be extended for spatially varying shifts with a mask. We validate PAL across 6 tasks, 16 datasets, and 16 architectures, demonstrating consistent improvements.  In summary, our contributions are:
\begin{itemize}
    \item We identify per-pair photometric inconsistency as a unified source of optimization distortion in paired low-level vision, and show that it arises from both task-intrinsic and acquisition-induced origins.
    \item We propose PAL, a closed-form color alignment loss that discounts nuisance photometric discrepancy from the gradient while preserving content supervision, with negligible computational overhead.
    \item We provide extensive real-task validation across 6 low-level vision tasks, 16 datasets, and 16 architectures, demonstrating improvements in fidelity metrics and generalization.
\end{itemize}

\section{Related Work}

\subsection{Supervised Learning for Low-level Vision}

Paired supervision has become the dominant training paradigm across a broad range of low-level vision tasks, yet each task family exhibits its own form of vulnerability to photometric inconsistency.

In image restoration tasks such as dehazing~\citep{cai2016dehazenet, qin2020ffa, song2023vision} and deraining~\citep{li2019single, zamir2020learning, zamir2022restormer}, the objective is to recover clean content without altering the scene photometry. However, paired training data for these tasks are typically generated from synthetic degradation pipelines or collected under controlled but imperfectly matched conditions. Subtle differences in camera exposure, white balance, or tone mapping between the degraded input and its clean counterpart introduce photometric shifts that are artifacts of the acquisition process rather than part of the degradation to be removed. Models trained with strict pixel-wise losses inherit these shifts as spurious supervision targets. This problem is further compounded in all-in-one restoration frameworks~\citep{wang2025modem, 10.1007/978-3-031-72670-5_7}, which train a single model across multiple degradation types such as rain, snow, and haze simultaneously. Because each constituent dataset is collected under different imaging conditions with its own photometric profile, mixing them amplifies the inconsistency. The model receives contradictory photometric supervision not only across image pairs, but also across tasks, making the consensus even more challenging.

Image enhancement tasks, including low-light enhancement~\citep{wei2018deep, zhang2019kindling, xu2023low, cai2023retinexformer, yan2025hvi} and underwater image enhancement~\citep{8917818, liu2022boths, islam2020fast}, face the converse challenge. Here, the ground truth intentionally differs from the input in brightness and color, making photometric transfer an integral part of the objective. A substantial body of work has developed architectures that range from Retinex-inspired decomposition networks~\citep{wei2018deep, zhang2019kindling, fu2020learning} to encoder-decoder~\citep{xu2023low, zamir2020learning} and transformer-based models~\citep{cai2023retinexformer, wang2022uformer, zamir2022restormer}. Despite their architectural diversity, these methods universally rely on pixel-wise reconstruction losses and therefore the easy global photometric gap dominates the gradient for both scenarios, suppressing the signal for content recovery.

There also exist tasks where intentional and unintentional photometric discrepancies coexist in a single training pair, an example of which is shadow removal~\citep{shadowhack, GuoHLCW23, GuoWYHWPW23, mei2024latent}. To be specific, the shadow regions require photometric correction, while the non-shadow regions should ideally remain unchanged. Paired datasets for this task are constructed by photographing scenes with and without cast shadows. 
During the capture of these datasets, the non-shadow area will inevitably introduce global photometric variation as well. This spatial coexistence makes shadow removal a natural testbed for methods that aim to handle both sources of inconsistency.

\begin{figure*}[t]
    \centering

    \includegraphics[width=0.16\textwidth]{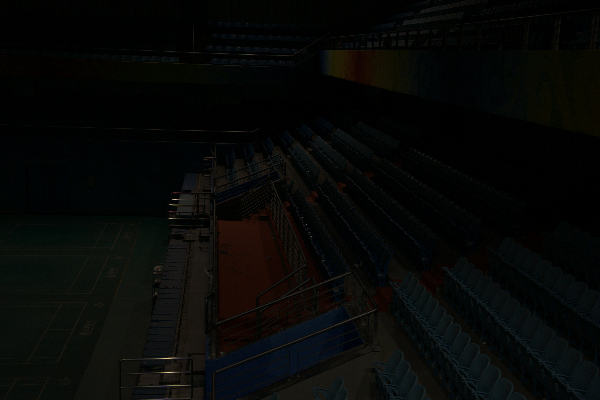}
    \includegraphics[width=0.16\textwidth]{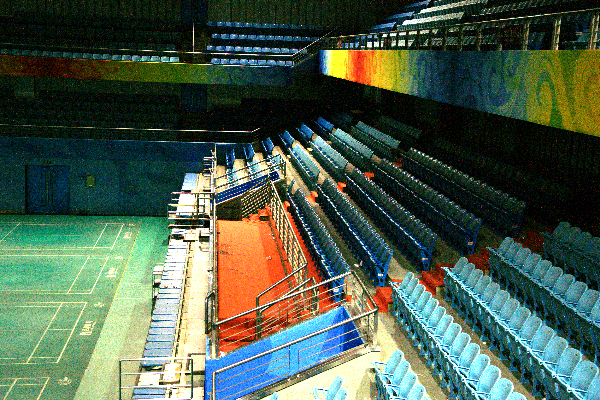}
    \includegraphics[width=0.16\textwidth]{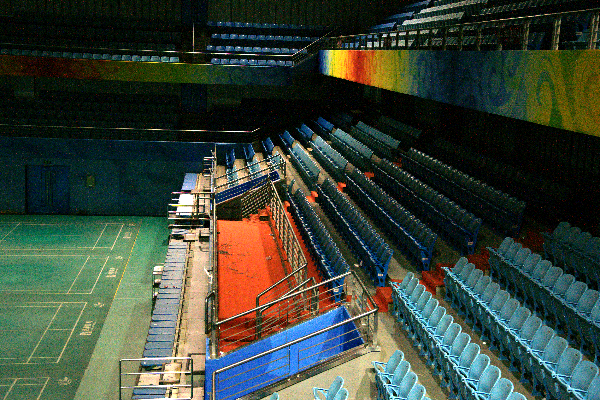}
    \includegraphics[width=0.16\textwidth]{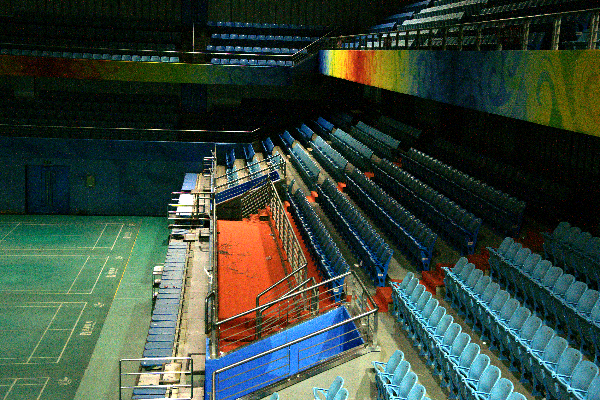}
    \includegraphics[width=0.16\textwidth]{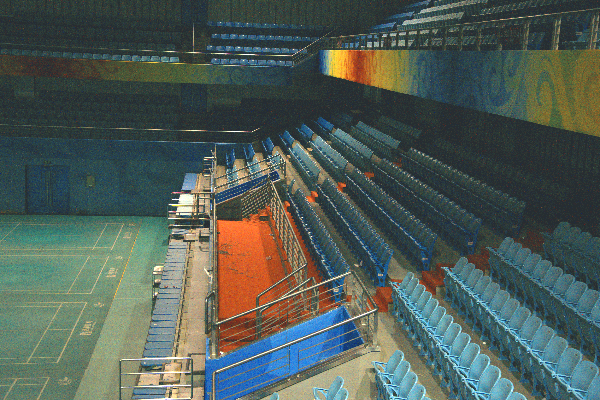}
    \includegraphics[width=0.16\textwidth]{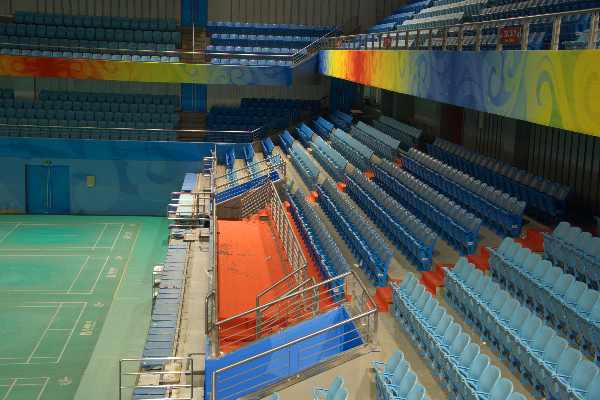}

    \includegraphics[width=0.16\textwidth]{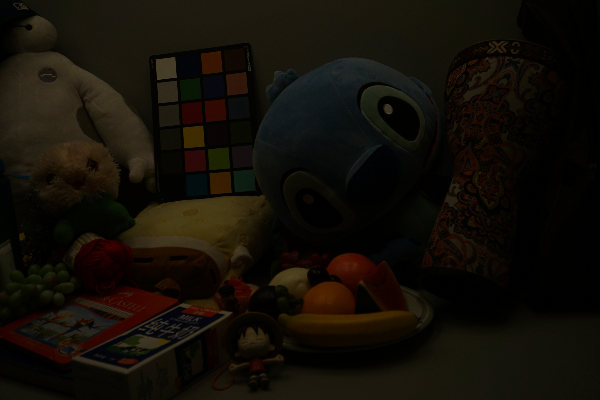}
    \includegraphics[width=0.16\textwidth]{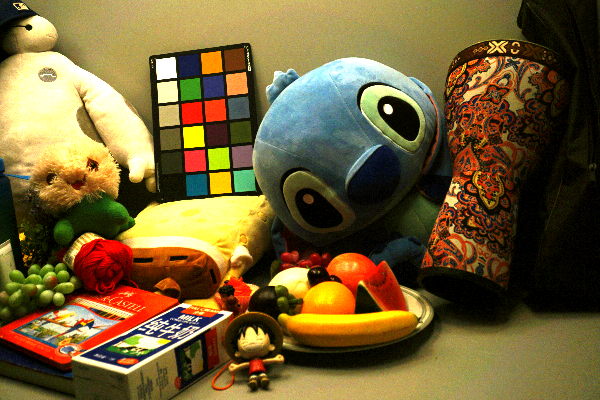}
    \includegraphics[width=0.16\textwidth]{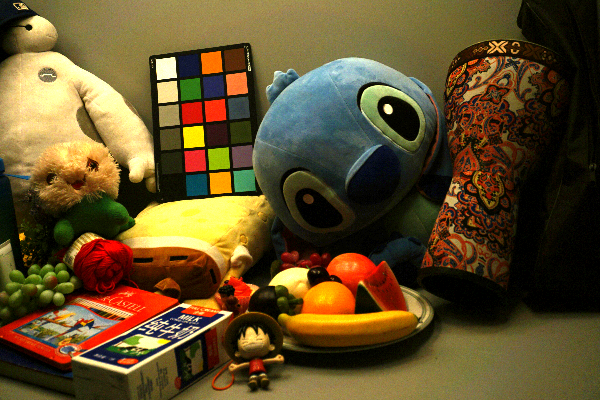}
    \includegraphics[width=0.16\textwidth]{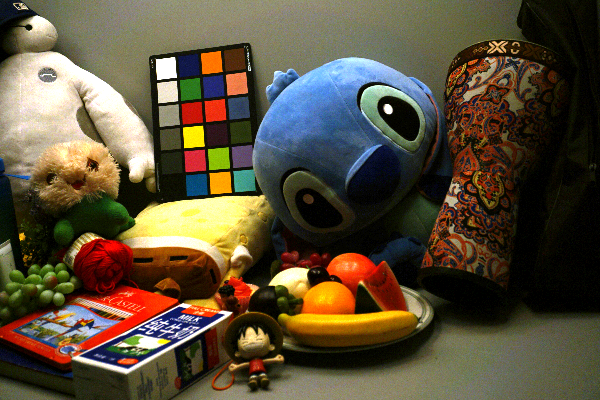}
    \includegraphics[width=0.16\textwidth]{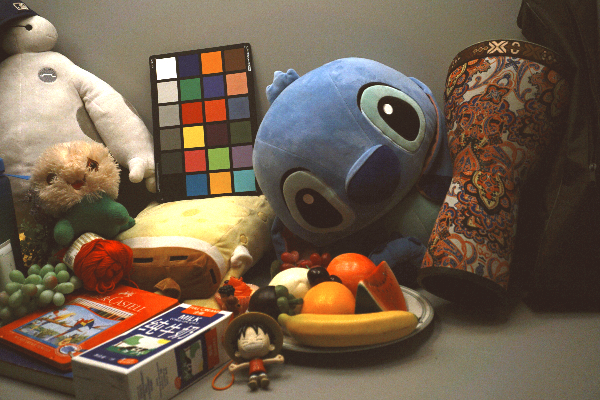}
    \includegraphics[width=0.16\textwidth]{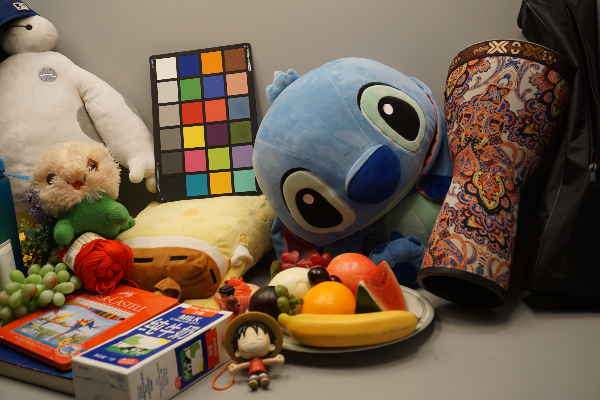}

        \includegraphics[width=0.16\textwidth]{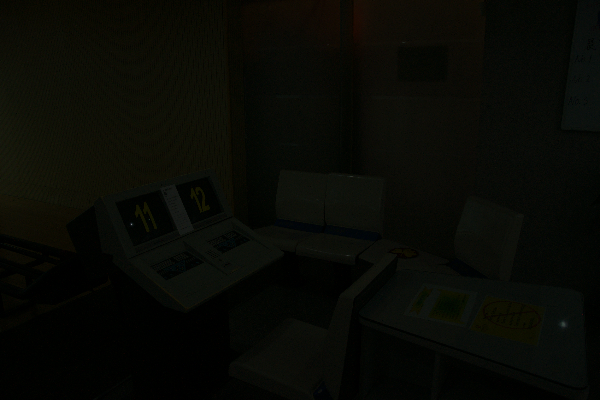}
    \includegraphics[width=0.16\textwidth]{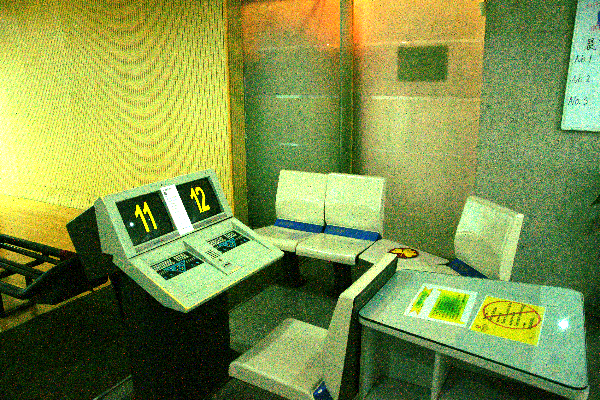}
    \includegraphics[width=0.16\textwidth]{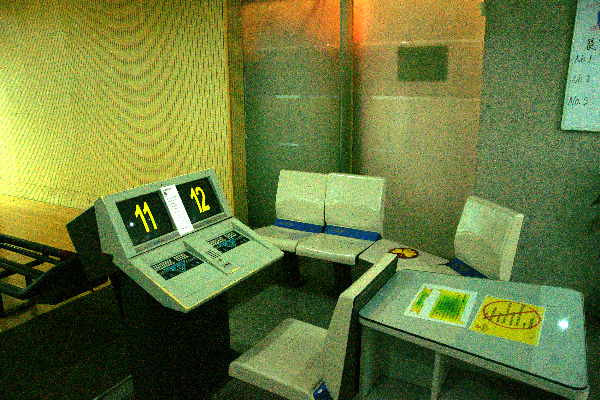}
    \includegraphics[width=0.16\textwidth]{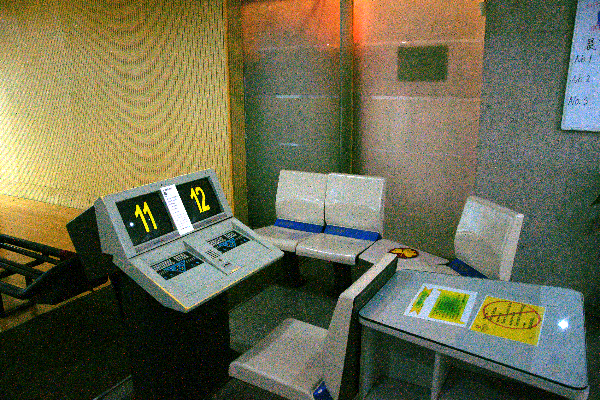}
    \includegraphics[width=0.16\textwidth]{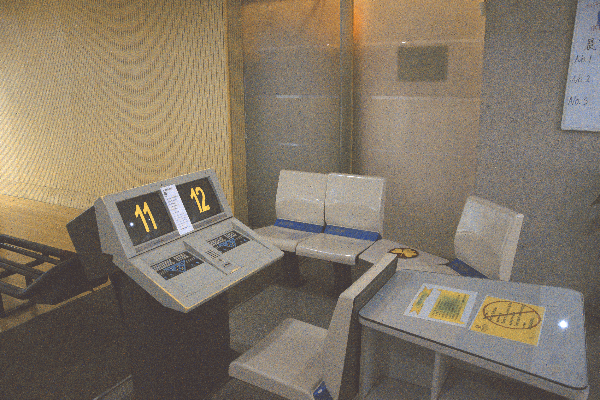}
    \includegraphics[width=0.16\textwidth]{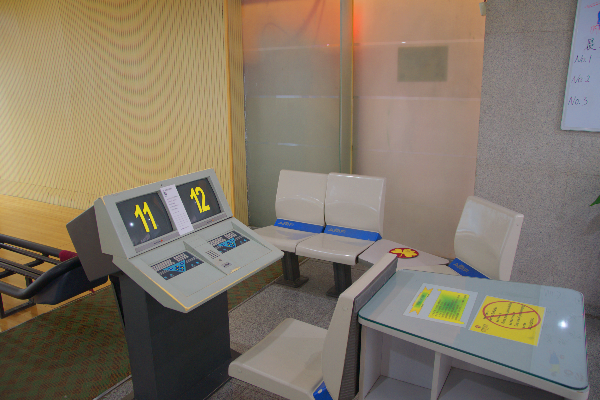}
   \begin{subfigure}{0.16\textwidth}
        \centering
        \subcaption{Input}
    \end{subfigure}
    \begin{subfigure}{0.16\textwidth}
        \centering
        \subcaption{Chan. Mean}
    \end{subfigure}
    \begin{subfigure}{0.16\textwidth}
        \centering
        \subcaption{Opti. Scalar}
    \end{subfigure}
    \begin{subfigure}{0.16\textwidth}
        \centering
        \subcaption{Opti. Diagonal}
    \end{subfigure}
    \begin{subfigure}{0.16\textwidth}
        \centering
        \subcaption{PAL}
    \end{subfigure}
    \begin{subfigure}{0.16\textwidth}
        \centering
        \subcaption{GT}
    \end{subfigure}

    \caption{Comparison of alignment families applied to Low-light image pairs. The optimal transforms are computed in closed form and applied. Channel-wise mean~(b) and Optimal Scalar~(c) cannot correct color-temperature shifts . Optimal Diagonal~(d) handles per-channel gain but not cross-channel coupling. Only PAL's full affine model~(e) closely matches the reference~(f).}
    \label{fig:photometric_analysis_grid}

\end{figure*}

\subsection{Loss Functions for Pixel-wise Supervision}

Perceptual losses~\citep{johnson2016perceptual,zhang2018unreasonable} shift supervision from pixel space to deep feature space by computing distances between VGG activations of enhanced and reference images. Because these features are learned to be invariant to low-level photometric variations, the loss becomes more robust to exact brightness and color values, focusing instead on semantic and structural content. Similarly, adversarial losses~\citep{goodfellow2014generative,isola2017image} train discriminators to distinguish real from enhanced images, encouraging outputs that lie on the manifold of natural images regardless of specific photometric properties. While these approaches significantly improve perceptual realism and provide implicit photometric robustness, they introduce substantial computational overhead, require careful hyperparameter tuning, and can produce characteristic artifacts~\citep{ledig2017photo,blau2018perception}. Moreover, they provide only indirect supervision, and the network must implicitly learn to ignore photometric variations rather than having them explicitly removed from the supervision signal.

A related family of techniques, style-transfer losses such as Gram-matrix matching~\citep{gatys2016image} and AdaIN statistics alignment~\citep{huang2017arbitrary}, also leverage global feature statistics. However, these methods differ from PAL in both purpose and mechanism. Style losses operate in deep feature space (e.g., VGG activations) and serve as \emph{additional supervision objectives} that encourage the network output to match a reference style. They add a constraint to the optimization. PAL, by contrast, operates directly in the RGB pixel space and serves as a \emph{loss modification}: rather than imposing a new target, it removes per-pair photometric nuisance from the existing pixel-wise supervision signal via closed-form affine regression, so that the residual gradient is redirected toward structural content. In short, style losses push outputs toward a desired distribution, whereas PAL subtracts a nuisance component from the training objective.

Another strategy decouples intensity from chrominance by operating in color spaces like HSV, YUV, or Lab~\citep{lore2017llnet, shadowhack, DBLP:journals/ijcv/GuoH23}. Recent work has proposed learnable or customized color spaces like HVI~\citep{yan2025hvi} or rectifed latent space~\citep{Li2025GenSIRR, liu2025latent}, specifically designed for a better operation space. However, color/latent space conversions introduce their own challenges. These methods can be non-linear and often require specialized architectures that limit their applicability. Furthermore, they do not solve the photometric inconsistency problem, but reorganize it into different, non-optimal channels with task-specific or even model-specific design. This undermines the generalizability to unknown datasets and tasks. In the low-light enhancement community, GT-Mean~\citep{zhang2019kindling, liao2025gtmean} is also proposed to align the global lightness. However, it is biased and can not capture the full global photometric clue. As a result, it is limited to the domain of low-light image enhancement. Drawing on classical color science~\citep{DBLP:journals/pami/FinlaysonHH01,DBLP:journals/tip/BarnardCF02}, we recognize that photometric relationships between images involve coupled color channels. White balance creates off-diagonal terms, while exposure affects channels non-uniformly. We analyze and derive a least-squares estimator for the linear color transformation, providing a more accurate and theoretically principled alignment framework that benefits training and improves generalization capability.

\section{Problem Analysis and Method}

In this section, we provide theoretical analysis and empirical evidence, then derive PAL as a remedy.

\begin{figure}
    \centering
    \includegraphics[width=0.7\linewidth]{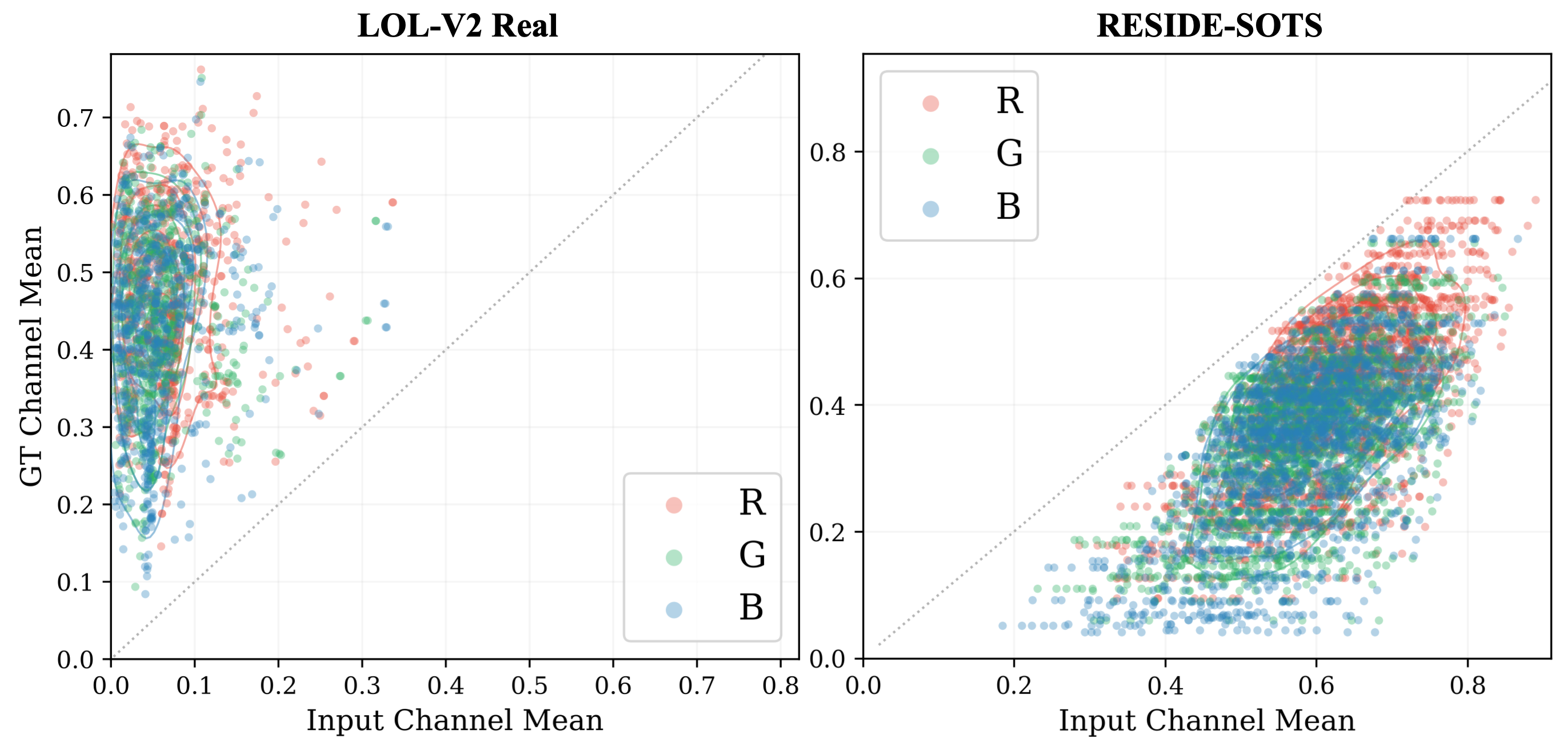}
    \caption{Per-pair photometric scatter plots. Each point represents one training pair, with per-channel (R/G/B) input mean on the $x$-axis, and GT mean on the $y$-axis. In both LOL-v2 and RESIDE-SOTS cases, the wide per-pair spread means a pixel-wise loss receives conflicting photometric supervision.}
    \label{fig:perpair_scatter}
   \vspace{-10pt}
\end{figure}

\subsection{Evidence of Photometric Inconsistency}

Paired low-level vision supervision regresses a prediction $\mathbf{\hat{I}}$ toward a target $\mathbf{I}_{\text{gt}}$ with a pixel-wise loss. This supervision is well-posed only when every prediction-target residual reflects restoration-relevant content alone. In practice, the residual also contains a photometric component, \emph{i.e.}, global shifts in brightness, color, or white balance, that vary from pair to pair within the training set. Because no single photometric mapping satisfies all pairs simultaneously, the pixel-wise loss receives conflicting supervision, as it tries to fit pair-specific photometric targets that are mutually contradictory, leaving an conflict in the gradient signal.

To expose this variation, we compute the per-channel mean brightness of each input and its ground truth across two representative datasets and plot them against each other in Figure~\ref{fig:perpair_scatter}. If photometric consistency held, all points would collapse onto a single line ($y{=}kx{+}b$). Instead, both panels show broad scatter. In LOLv2-Real (left), points sit far from the $y{=}x$ diagonal and spread widely: different pairs demand different brightness gains, and different color channels deviate by different amounts, indicating pair-specific color-temperature and white-balance shifts rather than a uniform brightness scale. In RESIDE-SOTS (right), the task should not alter scene photometry, yet the point cloud still scatters around the diagonal rather than concentrating on a single trajectory. Regardless of whether the photometric gap is large (enhancement) or small (restoration), the per-pair inconsistency is present and injects conflicting targets into the loss. This conflict has a direct consequence on optimization. It shifts every pixel uniformly since the photometric discrepancy is spatially \textit{dense}, while structural differences (textures, edges) are spatially \textit{sparse}. As a result, the photometric component dominates the gradient energy budget. 
\begin{figure}
    \centering
    \includegraphics[width=0.7\linewidth]{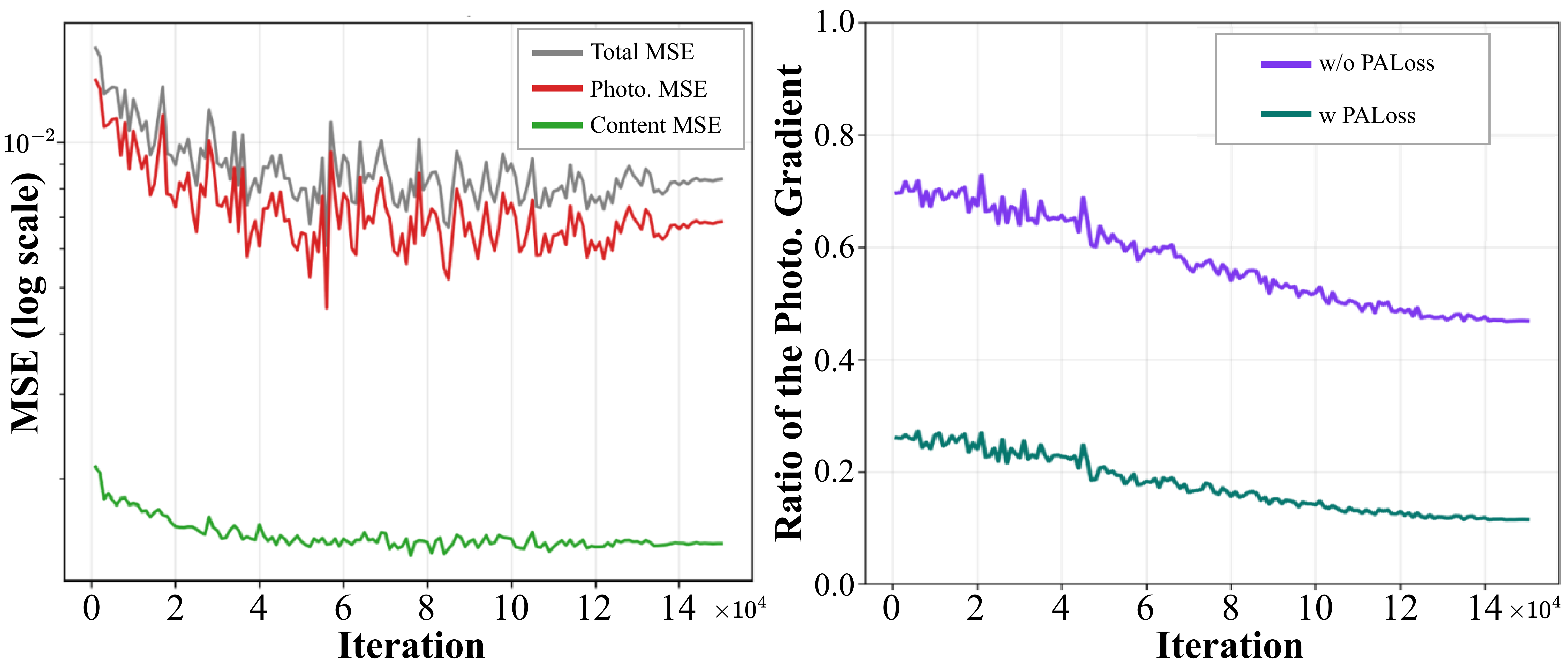}
    \caption{(Left) Decomposed photometric/content error on validation set. (Right) Gradient ratio $\rho$. Plots are sampled from a Retinexformer trained on LOL-V1 every 1000 steps.}
    \label{fig:gradient_stat}
    \vspace{-10pt}
\end{figure}

\subsection{Gradient Dominance of Photometric Error}

As Figure~\ref{fig:perpair_scatter} shows, the prediction-target residual contains a per-pair photometric component. We now prove that, under pixel-wise MSE losses, this component dominates the gradient budget, crowding out the structural supervision that actually drives restoration quality.

\noindent\textbf{Residual decomposition.}
Let $\hat{\mathbf{I}}^{(i)}, \mathbf{I}_{\text{gt}}^{(i)} \in \mathbb{R}^3$ denote the prediction and target at pixel $i$, and let $({\mathbf{C}}^{*}, {\mathbf{b}}^{*})$ be the least-squares affine alignment that minimizes $\sum_{i=1}^{N}\|\mathbf{C}\hat{\mathbf{I}}^{(i)}+\mathbf{b}-\mathbf{I}_{\text{gt}}^{(i)}\|^{2}$.
The per-pixel residual decomposes as
\begin{equation}
    \mathbf{I}_{\text{gt}}^{(i)} - \hat{\mathbf{I}}^{(i)}
    = \underbrace{(\mathbf{C}^{*}-\mathbf{E})\,\hat{\mathbf{I}}^{(i)}+\mathbf{b}^{*}}_{\boldsymbol{\Delta}_{p}^{(i)}\;\text{(photometric)}}
    + \underbrace{\mathbf{I}_{\text{gt}}^{(i)}-\mathbf{C}^{*}\hat{\mathbf{I}}^{(i)}-\mathbf{b}^{*}}_{\boldsymbol{\Delta}_{s}^{(i)}\;\text{(structural)}}.
    \label{eq:residual_decomp}
\end{equation}

\begin{proposition}[Loss decomposition]
\label{prop:pythag}
The pixel-wise MSE decomposes exactly into a photometric term and a structural term with zero cross-term:
\begin{equation}
    \sum_{i}\bigl\|\mathbf{I}_{\emph{gt}}^{(i)}-\hat{\mathbf{I}}^{(i)}\bigr\|^{2}
    = \sum_{i}\bigl\|\boldsymbol{\Delta}_{p}^{(i)}\bigr\|^{2}
    + \sum_{i}\bigl\|\boldsymbol{\Delta}_{s}^{(i)}\bigr\|^{2}.
    \label{eq:pythag}
\end{equation}
\end{proposition}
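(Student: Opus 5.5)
Expanding the left-hand side of \eqref{eq:pythag} with the decomposition \eqref{eq:residual_decomp} gives
\[
\sum_i \bigl\|\boldsymbol{\Delta}_p^{(i)}\bigr\|^2 + \sum_i \bigl\|\boldsymbol{\Delta}_s^{(i)}\bigr\|^2 + 2\sum_i \bigl(\boldsymbol{\Delta}_p^{(i)}\bigr)^{\top}\boldsymbol{\Delta}_s^{(i)} .
\]
So the proposition is equivalent to showing that the cross term $\sum_i (\boldsymbol{\Delta}_p^{(i)})^{\top}\boldsymbol{\Delta}_s^{(i)}$ vanishes. I will prove this from the first-order optimality conditions of the least-squares problem that defines $(\mathbf{C}^*,\mathbf{b}^*)$. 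This is the normal-equation (Pythagorean) argument for linear regression.

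The objective $J(\mathbf{C},\mathbf{b})=\sum_i\|\mathbf{C}\hat{\mathbf{I}}^{(i)}+\mathbf{b}-\mathbf{I}_{\text{gt}}^{(i)}\|^2$ is a convex quadratic. Any minimizer therefore makes its gradient zero. Since $\boldsymbol{\Delta}_s^{(i)}=\mathbf{I}_{\text{gt}}^{(i)}-\mathbf{C}^*\hat{\mathbf{I}}^{(i)}-\mathbf{b}^*$ is exactly the negative regression residual at the optimum, differentiating with respect to $\mathbf{b}$ and $\mathbf{C}$ gives the normal equations
\[
\sum_i \boldsymbol{\Delta}_s^{(i)}=\mathbf{0},\qquad \sum_i \boldsymbol{\Delta}_s^{(i)}\bigl(\hat{\mathbf{I}}^{(i)}\bigr)^{\top}=\mathbf{0}\in\mathbb{R}^{3\times3}.
\]
In words, the structural residual is orthogonal, across pixels, to the constant function and to each channel of the prediction.

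Next I substitute $\boldsymbol{\Delta}_p^{(i)}=\mathbf{M}\hat{\mathbf{I}}^{(i)}+\mathbf{b}^*$, where $\mathbf{M}=\mathbf{C}^*-\mathbf{E}$ and $\mathbf{E}$ is the identity. The cross term then splits into two pieces:
\[
\sum_i (\boldsymbol{\Delta}_p^{(i)})^{\top}\boldsymbol{\Delta}_s^{(i)} = \operatorname{tr}\Bigl(\mathbf{M}\sum_i \hat{\mathbf{I}}^{(i)}(\boldsymbol{\Delta}_s^{(i)})^{\top}\Bigr) + (\mathbf{b}^*)^{\top}\sum_i\boldsymbol{\Delta}_s^{(i)}.
\]
Both pieces vanish by the normal equations, which proves the claim.

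The main point needing care is that the argument does not require a unique minimizer. If the pixel covariance of $\hat{\mathbf{I}}$ is singular, $(\mathbf{C}^*,\mathbf{b}^*)$ may not be unique. However, every minimizer of a differentiable convex function satisfies the stationarity conditions, so the proof goes through for any choice of minimizer.

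A more conceptual version of the same argument is also available. Stack all pixels so that the predictions become a $3N$-vector. The set $\{\mathbf{C}\hat{\mathbf{I}}^{(i)}+\mathbf{b}\}$ is then a linear subspace $V$ that contains $\hat{\mathbf{I}}$ itself (take $\mathbf{C}=\mathbf{E}$, $\mathbf{b}=\mathbf{0}$). Its least-squares fit $\mathbf{C}^*\hat{\mathbf{I}}+\mathbf{b}^*$ is the orthogonal projection $P_V\mathbf{I}_{\text{gt}}$. Hence $\boldsymbol{\Delta}_p=P_V\mathbf{I}_{\text{gt}}-\hat{\mathbf{I}}$ lies in $V$, while $\boldsymbol{\Delta}_s=\mathbf{I}_{\text{gt}}-P_V\mathbf{I}_{\text{gt}}$ lies in $V^{\perp}$. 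The identity \eqref{eq:pythag} is then exactly the Pythagorean theorem.
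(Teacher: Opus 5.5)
Your proof is correct and follows the paper's own argument: expand the square, use the normal equations $\sum_i \boldsymbol{\Delta}_s^{(i)}=\mathbf{0}$ and $\sum_i \boldsymbol{\Delta}_s^{(i)}\hat{\mathbf{I}}^{(i)\top}=\mathbf{0}$, and show the cross-term vanishes via the trace identity. You also correctly keep the factor of $2$ on the cross-term, which the paper's displayed expansion drops (harmless, since the term is zero), and your projection-onto-$V$ reformulation and non-uniqueness remark are valid additions rather than a different route.
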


\begin{proof}

\noindent The pixel-wise MSE can be described as
\begin{equation}
    \sum_{i}\bigl\|\mathbf{I}_{\emph{gt}}^{(i)}-\hat{\mathbf{I}}^{(i)}\bigr\|^{2}
    = \sum_{i}\bigl\|\boldsymbol{\Delta}_{p}^{(i)}\bigr\|^{2}
    + \sum_{i}\bigl\|\boldsymbol{\Delta}_{s}^{(i)}\bigr\|^{2} +   \textstyle\sum_{i}\langle\boldsymbol{\Delta}_{p}^{(i)},\boldsymbol{\Delta}_{s}^{(i)}\rangle.
    \label{eq:original_form}
\end{equation}
Noticing that the first-order optimality conditions of the least-squares affine fit yield
\begin{equation}
    \textstyle\sum_{i}\boldsymbol{\Delta}_{s}^{(i)}=\mathbf{0},\qquad
    \textstyle\sum_{i}\boldsymbol{\Delta}_{s}^{(i)}\,\hat{\mathbf{I}}^{(i)\!\top}=\mathbf{0}.
    \label{eq:opt_cond}
\end{equation}
Expanding the cross-term:
\begin{align}
    \textstyle\sum_{i}\langle\boldsymbol{\Delta}_{p}^{(i)},\boldsymbol{\Delta}_{s}^{(i)}\rangle
    &= \mathrm{tr}\!\Bigl[(\mathbf{C}^{*}\!-\!\mathbf{E})^{\!\top}\!\textstyle\sum_{i}\boldsymbol{\Delta}_{s}^{(i)}\hat{\mathbf{I}}^{(i)\!\top}\Bigr]
       + \mathbf{b}^{*\!\top}\!\textstyle\sum_{i}\boldsymbol{\Delta}_{s}^{(i)} \nonumber\\
    &= \mathrm{tr}\!\bigl[(\mathbf{C}^{*}\!-\!\mathbf{E})^{\!\top}\!\cdot\mathbf{0}\bigr]
       + \mathbf{b}^{*\!\top}\!\cdot\mathbf{0} = 0.
    \label{eq:cross_zero}
\end{align}

\end{proof}

\noindent\textbf{Implication for gradient budget.} For a standard Mean Squared Error (MSE) loss $\mathcal{L}_{\text{MSE}} = \frac{1}{N} \sum_{i}\|\mathbf{I}_{\text{gt}}^{(i)}-\hat{\mathbf{I}}^{(i)}\|^{2}$, the per-pixel gradient with respect to the prediction is $-\frac{2}{N}(\boldsymbol{\Delta}_{p}^{(i)}+\boldsymbol{\Delta}_{s}^{(i)})$. Leveraging the exact orthogonality established above, the total gradient energy perfectly splits into two independent budgets:
\begin{equation}
    \sum_{i}\bigl\|\nabla_{\hat{\mathbf{I}}^{(i)}}\mathcal{L}_{\text{MSE}}\bigr\|^{2}
    = \underbrace{\frac{4}{N^2}\sum_{i}\|\boldsymbol{\Delta}_{p}^{(i)}\|^{2}}_{\mathcal{E}_{\text{phot}}}
    \;+\;
    \underbrace{\frac{4}{N^2}\sum_{i}\|\boldsymbol{\Delta}_{s}^{(i)}\|^{2}}_{\mathcal{E}_{\text{struct}}}.
    \label{eq:grad_energy}
\end{equation}
Let $\rho = \mathcal{E}_{\text{phot}}/(\mathcal{E}_{\text{phot}}+\mathcal{E}_{\text{struct}})$ denote the photometric fraction of the total gradient energy. The critical issue lies in the spatial density of these errors. When a macroscopic photometric mismatch occurs (e.g., a global brightness shift), the photometric error is \textit{dense}, accumulating across all $N$ pixels. Its overall gradient energy $\mathcal{E}_{\text{phot}}$ therefore scales proportionally to $1/N$. In contrast, the structural error $\boldsymbol{\Delta}_{s}^{(i)}$ is \textit{sparse}, confined to a small subset of $M$ localized pixels around misaligned textures or edges ($M \ll N$). Its gradient energy $\mathcal{E}_{\text{struct}}$ only accumulates over these $M$ pixels, scaling as $M/N^2$. 

\begin{figure}[t]
    \centering
    \setlength{\tabcolsep}{1pt}
    \renewcommand{\arraystretch}{0.5}
    \begin{tabular}{cccc}
        \includegraphics[width=0.245\linewidth]{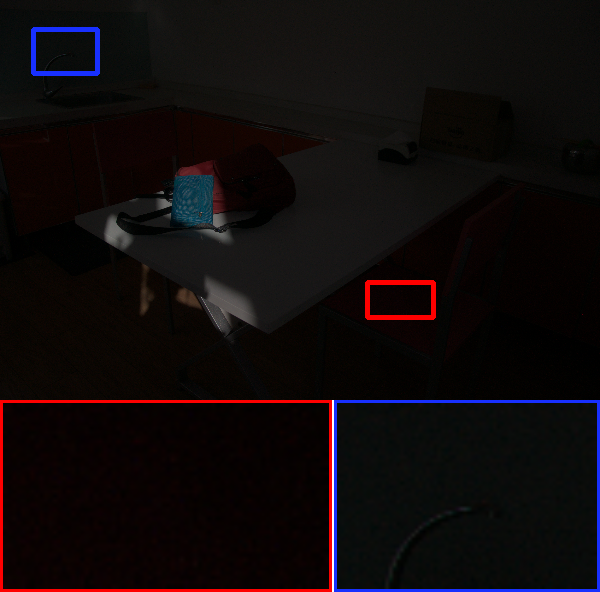} &
        \includegraphics[width=0.245\linewidth]{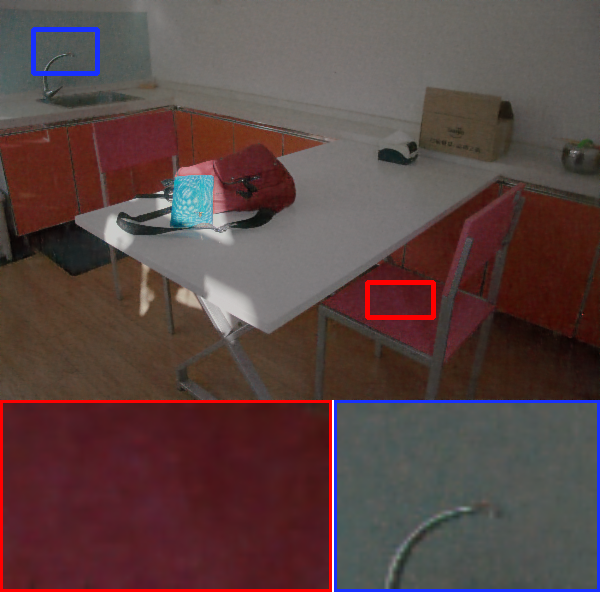} &
        \includegraphics[width=0.245\linewidth]{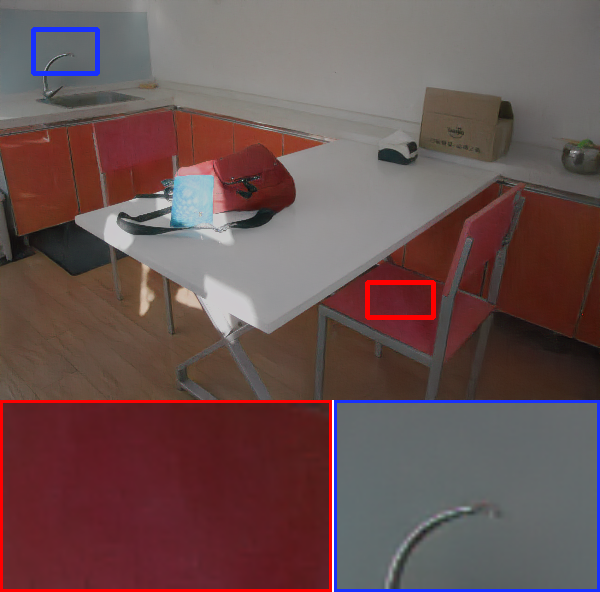} &
        \includegraphics[width=0.245\linewidth]{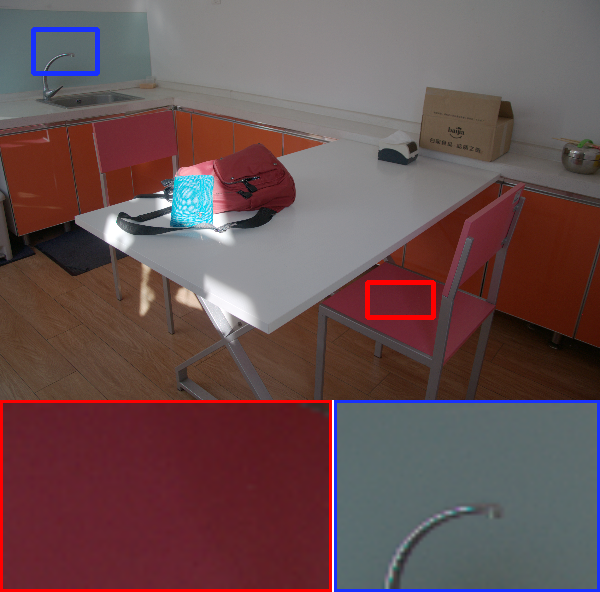} \\
        \includegraphics[width=0.245\linewidth]{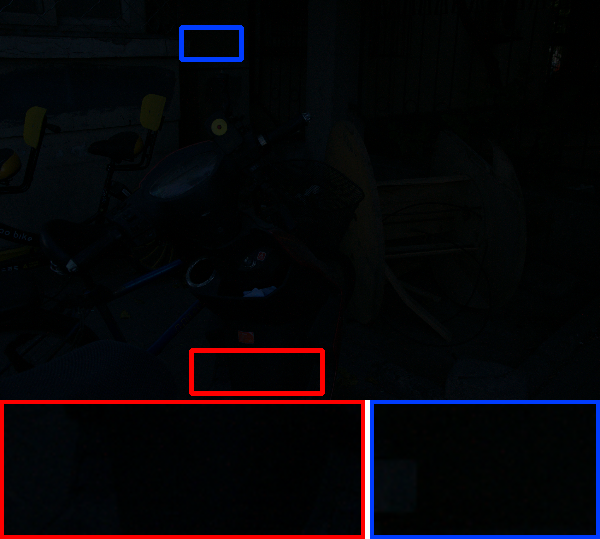} &
        \includegraphics[width=0.245\linewidth]{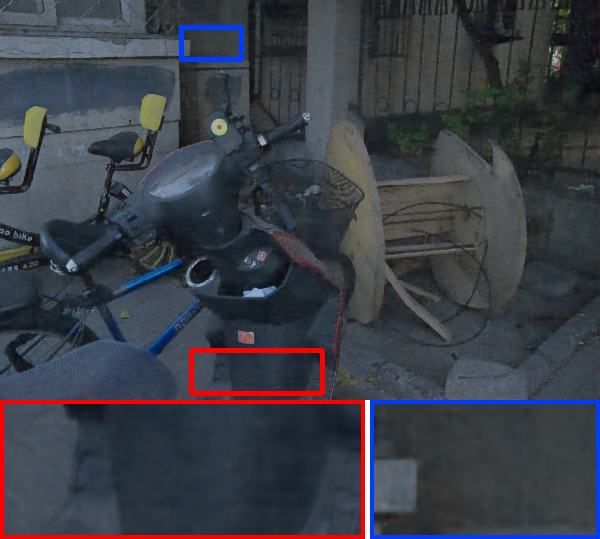} &
        \includegraphics[width=0.245\linewidth]{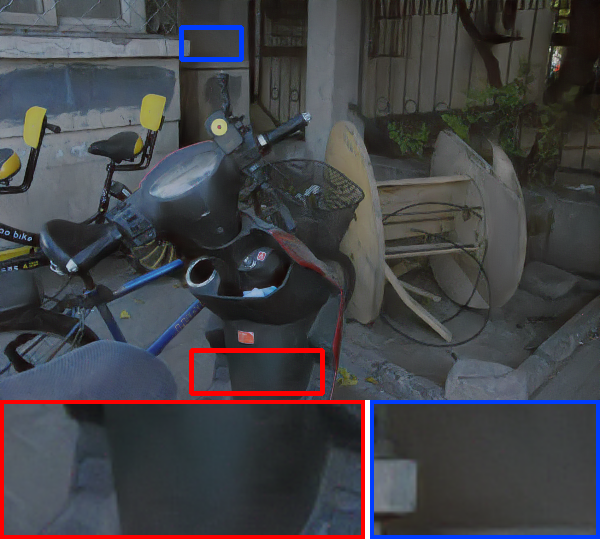} &
        \includegraphics[width=0.245\linewidth]{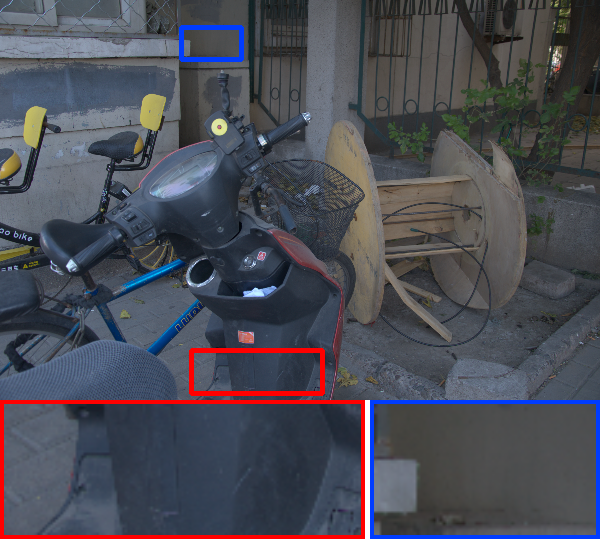} \\
        \addlinespace[2pt]
        Input & CIDNet & CIDNet+PAL & GT
    \end{tabular}
    \caption{Qualitative comparisons on LLIE (CIDNet on LOLv2-real). PAL produces more natural colors.}
    \label{fig:llie_nightdehaze}
\end{figure}

\begin{figure}[t]
    \centering
    \setlength{\tabcolsep}{1pt}
    \renewcommand{\arraystretch}{0.5}
    \begin{tabular}{cccc}
        \includegraphics[width=0.245\linewidth]{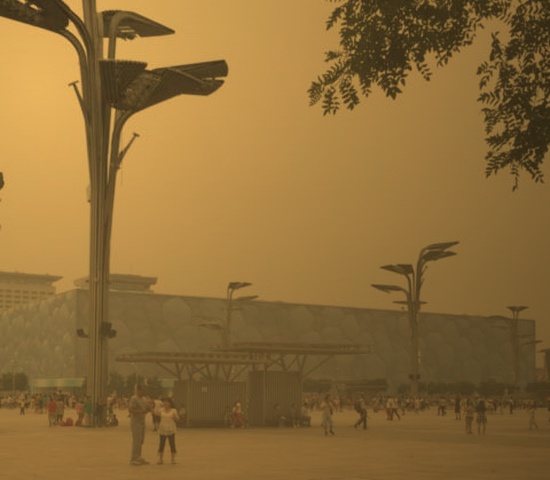} &
        \includegraphics[width=0.245\linewidth]{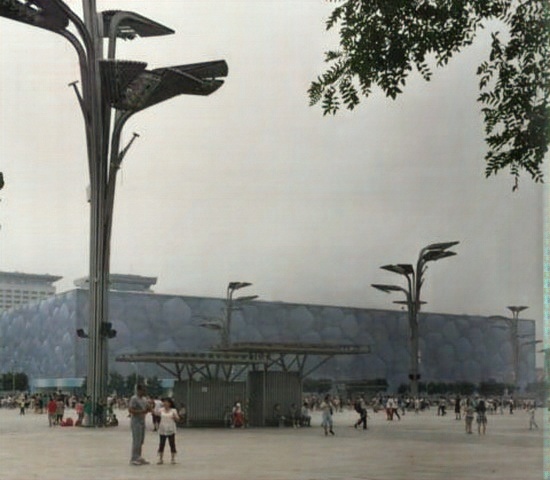} &
        \includegraphics[width=0.245\linewidth]{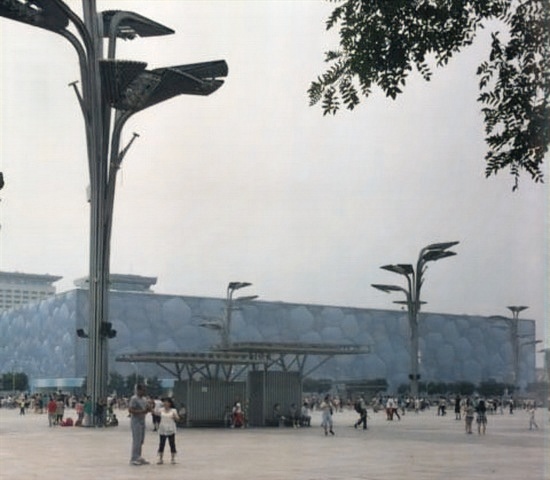} &
        \includegraphics[width=0.245\linewidth]{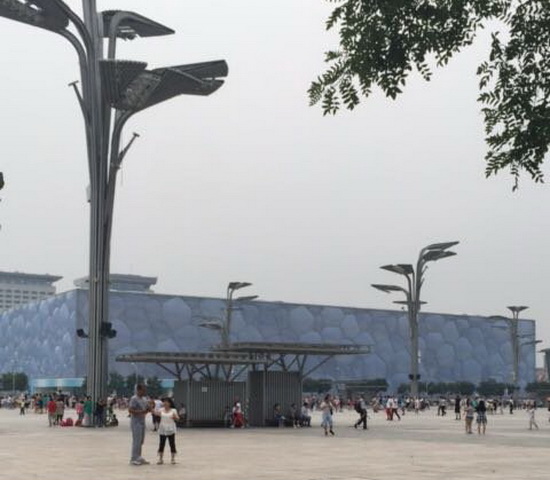} \\
        \includegraphics[width=0.245\linewidth]{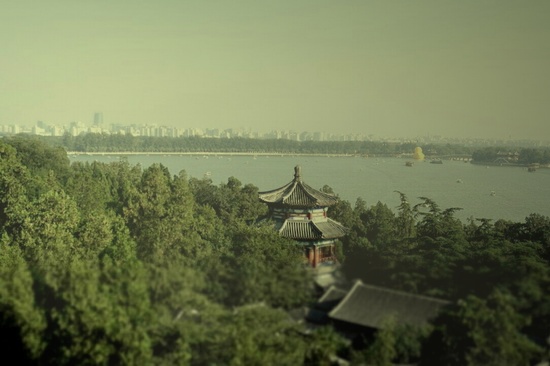} &
        \includegraphics[width=0.245\linewidth]{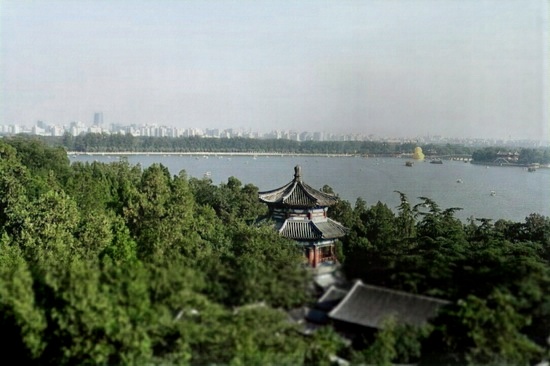} &
        \includegraphics[width=0.245\linewidth]{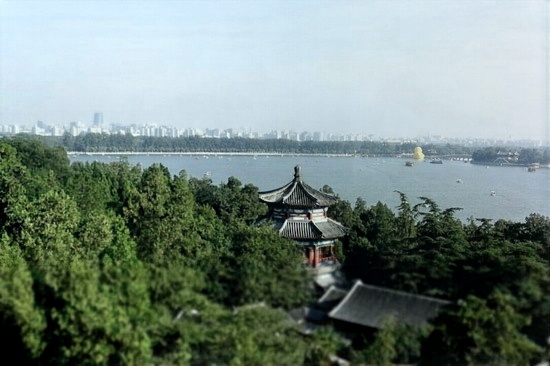} &
        \includegraphics[width=0.245\linewidth]{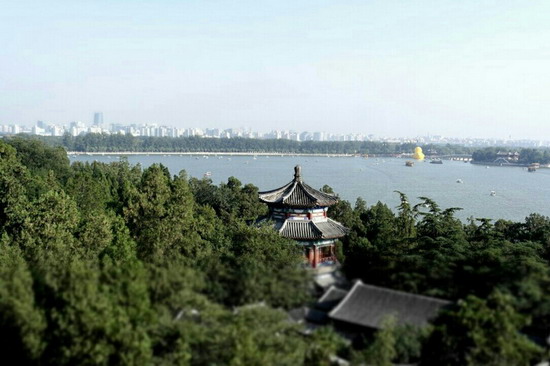} \\
        \addlinespace[2pt]
        Input & NAFNet & NAFNet+PAL & GT
    \end{tabular}
    \caption{Qualitative comparisons on nighttime dehazing (NAFNet on NHR). PAL reduces residual haze and color cast.}
    \label{fig:nightdehaze}
\end{figure}

Consequently, the ratio of gradient energies $\mathcal{E}_{\text{phot}} / \mathcal{E}_{\text{struct}}$ is proportional to $N/M$. Because $N$ is typically orders of magnitude larger than $M$, $\mathcal{E}_{\text{phot}}$ overwhelmingly overshadows $\mathcal{E}_{\text{struct}}$ (\textit{i.e.}, $\rho \to 1$), forcing the network to exhaust its gradient budget acting as a global color-matcher rather than a detail restorer. To validate this, we train a Retinexformer~\citep{cai2023retinexformer} on LOL-v1~\citep{wei2018deep} dataset and plot the val error decomposed into photometric and content in Figure~\ref{fig:gradient_stat} along with the photometric ratio $\rho$. It is clear that the photometric component is dominating the gradient, and the content improves slowly.

This motivates a loss function that explicitly removes the photometric component $\boldsymbol{\Delta}_{p}$ from the supervision signal, so that the full gradient budget is redirected toward structural restoration.

\subsection{Why Affine Alignment}

The gradient analysis above shows that the photometric component must be discounted from the loss function. This requires choosing an alignment model to estimate and remove this component. Alignment models can be ordered by expressiveness: a \emph{scalar} correction ($\alpha\mathbf{I}$, one parameter) removes brightness offset; a \emph{diagonal} model ($\text{diag}(\mathbf{d})\,\mathbf{I}$, three parameters) allows independent per-channel gain; and a \emph{full affine} model ($\mathbf{C}\mathbf{I}+\mathbf{b}$, twelve parameters) additionally captures cross-channel coupling and additive bias. Mean-brightness normalization~\citep{liao2025gtmean,zhang2019kindling} falls in the scalar family and can equalize overall luminance, yet it leaves color-temperature and white-balance shifts intact because these involve coupled, channel-dependent transformations. A diagonal model handles per-channel exposure differences but still cannot represent the off-diagonal terms. Figure~\ref{fig:photometric_analysis_grid} illustrates this on real LOL pairs. For each input, the optimal transform from each family is computed in closed form and applied. Only the full affine model reproduces the reference color, confirming that real photometric discrepancy requires cross-channel coupling to be modeled explicitly.

The affine model is the natural match for the nuisance we identified. The dominant photometric discrepancy across paired datasets is \emph{global}: it manifests as per-pair shifts in overall brightness, color temperature, and white balance, all of which are well described by a twelve-parameter affine transform. Crucially, fitting a global model to a per-image residual that also contains spatially localized content does not absorb that content. By construction, the least-squares affine fit captures only the variance that correlates globally with the prediction, while localized texture and structural differences remain in the residual and continue to supervise the network. Spatially varying photometric effects (e.g., vignetting and local illumination gradients) are not modeled by PAL; however, because they lack global correlation, the affine fit largely ignores them, and PAL turns toward standard pixel-wise supervision. Furthermore, when the photometric inconsistency is negligibly small, the regularized least-squares solution converges to $\mathbf{C}^{*}\!\to\!\mathbf{E}$, $\mathbf{b}^{*}\!\to\!\mathbf{0}$, so that $\mathcal{L}_{\text{PAL}}$ gracefully degenerates to the standard pixel-wise loss. Therefore, PAL discounts photometric nuisance when present and reduces to conventional supervision when absent. The performance across tasks with both global and localized degradation components (all weather) confirms this behavior empirically.

\begin{figure}[t]
    \centering
    \setlength{\tabcolsep}{1pt}
    \renewcommand{\arraystretch}{0.5}
    \begin{tabular}{cccc}
        \includegraphics[width=0.245\linewidth]{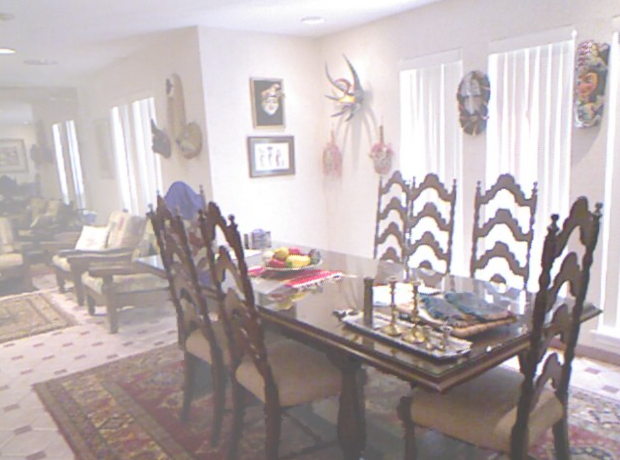} &
        \includegraphics[width=0.245\linewidth]{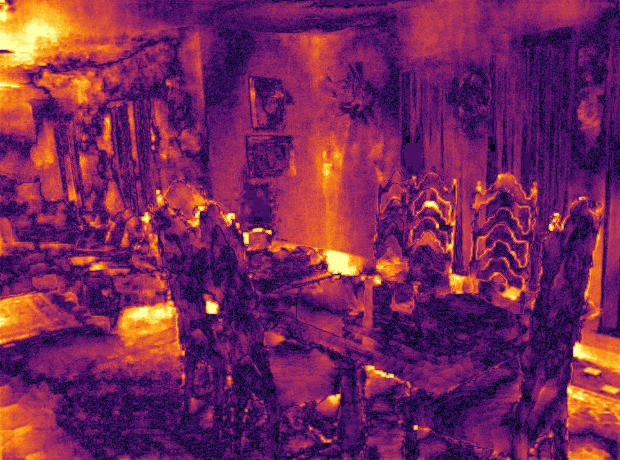} &
        \includegraphics[width=0.245\linewidth]{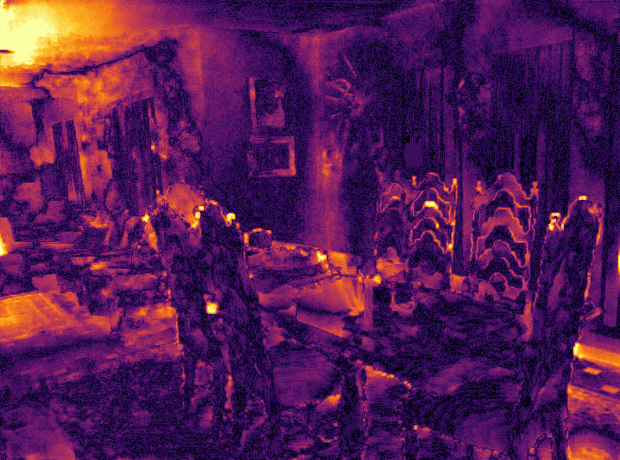} &
        \includegraphics[width=0.245\linewidth]{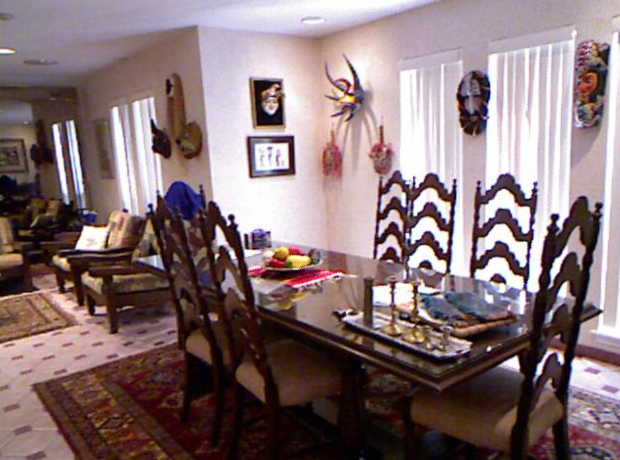} \\
        \includegraphics[width=0.245\linewidth]{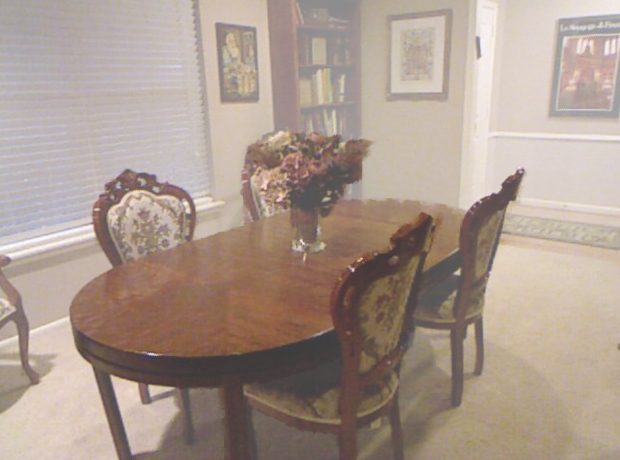} &
        \includegraphics[width=0.245\linewidth]{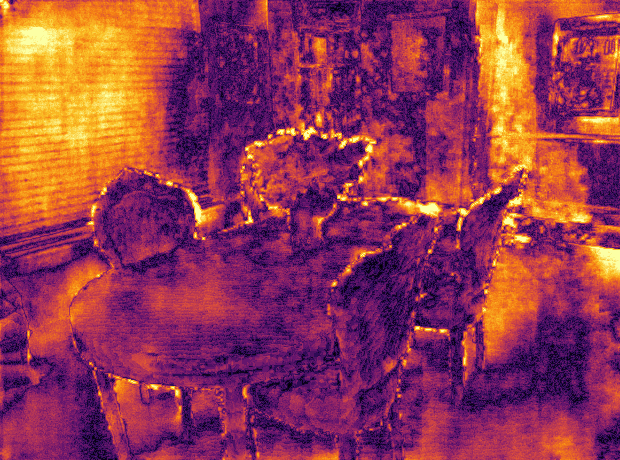} &
        \includegraphics[width=0.245\linewidth]{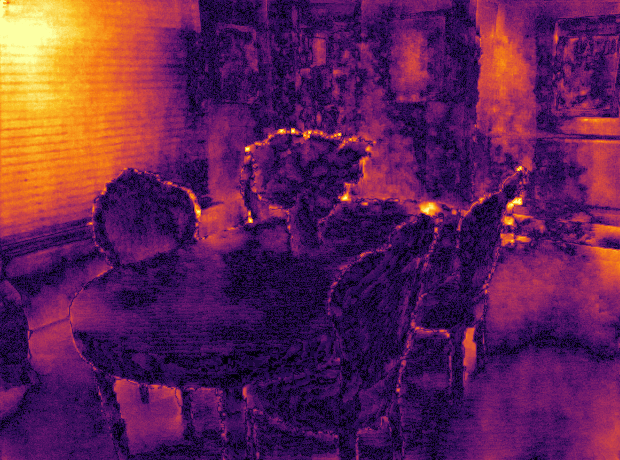} &
        \includegraphics[width=0.245\linewidth]{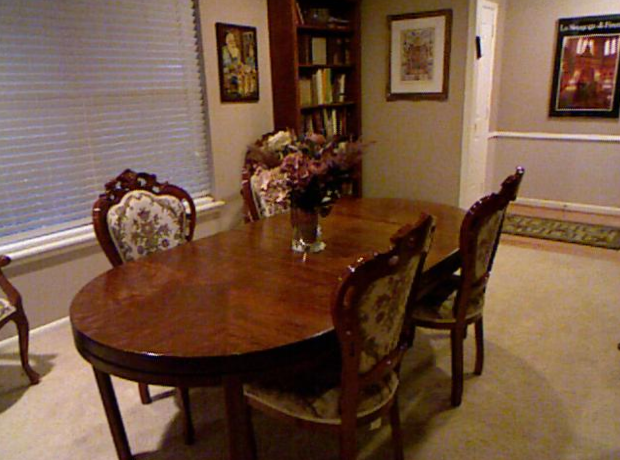} \\
        \addlinespace[2pt]
        Input & Baseline & +PAL (Ours) & GT
    \end{tabular}
    \caption{Qualitative comparison on MITNet image dehazing. We use error map to highlight the difference. Darker is better. PAL reduces residual color cast relative to the baseline.}
    \label{fig:dehaze}
\end{figure}

\begin{figure}[t]
    \centering
    \setlength{\tabcolsep}{1pt}
    \renewcommand{\arraystretch}{0.5}
    \begin{minipage}[t]{0.495\linewidth}
        \centering
        \begin{tabular}{ccc}
            \includegraphics[width=0.325\linewidth]{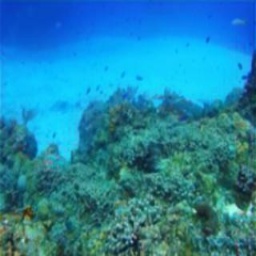} &
            \includegraphics[width=0.325\linewidth]{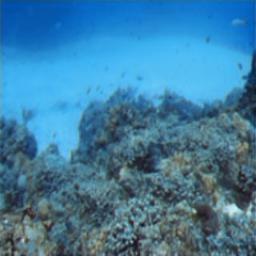} &
            \includegraphics[width=0.325\linewidth]{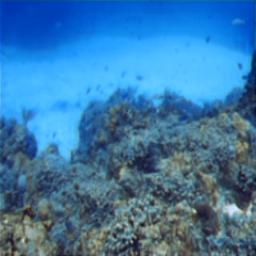} \\
            \includegraphics[width=0.325\linewidth]{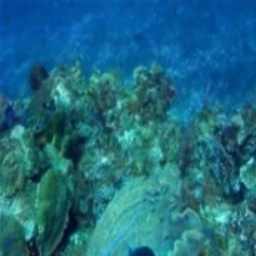} &
            \includegraphics[width=0.325\linewidth]{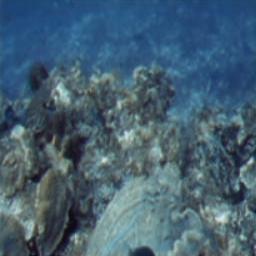} &
            \includegraphics[width=0.325\linewidth]{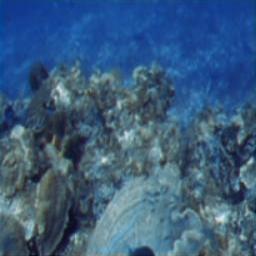} \\
            \addlinespace[2pt]
            Input & LiteEnhanceNet & +PAL (Ours)
        \end{tabular}
    \end{minipage}%
    \hfill
    \begin{minipage}[t]{0.495\linewidth}
        \centering
        \begin{tabular}{ccc}
            \includegraphics[width=0.325\linewidth]{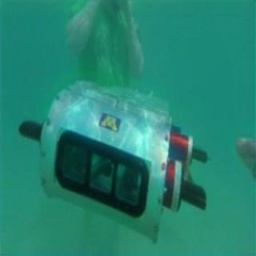} &
            \includegraphics[width=0.325\linewidth]{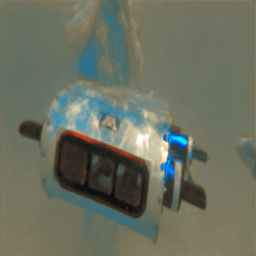} &
            \includegraphics[width=0.325\linewidth]{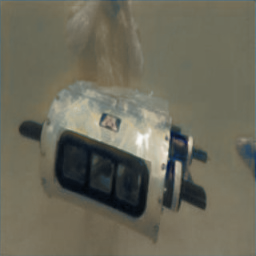} \\
            \includegraphics[width=0.325\linewidth]{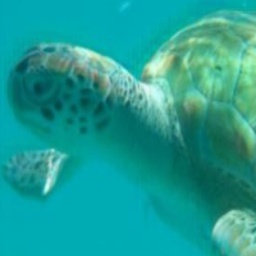} &
            \includegraphics[width=0.325\linewidth]{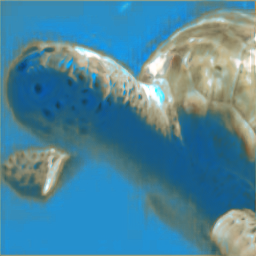} &
            \includegraphics[width=0.325\linewidth]{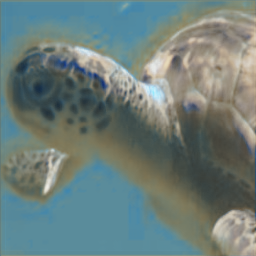} \\
            \addlinespace[2pt]
            Input & Boths & +PAL (Ours)
        \end{tabular}
    \end{minipage}
    \caption{Qualitative comparison on underwater image enhancement (EUVP). PAL produces outputs with more natural color and fewer artifacts.}
    \label{fig:uie}
\end{figure}

\begin{table*}[t]
\centering
\caption{Quantitative comparison on LOL, across PSNR ($\uparrow$), SSIM ($\uparrow$), LPIPS ($\downarrow$), IQA ($\uparrow$) and IAA ($\uparrow$).}
\setlength{\tabcolsep}{1pt}

\resizebox{\linewidth}{!}{
\begin{tabular}{l ccccc ccccc ccccc}
\toprule
\multirow{2}{*}{Methods} &
  \multicolumn{5}{c}{LOLv1} &
  \multicolumn{5}{c}{LOLv2-syn} &
  \multicolumn{5}{c}{LOLv2-real} \\
\cmidrule(lr){2-6} \cmidrule(lr){7-11} \cmidrule(lr){12-16}
 &
  PSNR & SSIM & LPIPS & IQA & IAA &
  PSNR & SSIM & LPIPS & IQA & IAA &
  PSNR & SSIM & LPIPS & IQA & IAA \\
\midrule
MIRNet~\citep{zamir2020learning}         & 20.57 & 0.769 & 0.254 & 2.914  & \textbf{1.855}  & 21.74 & 0.877 & 0.138 &  2.650  & 2.023 & 21.28 & 0.791 & 0.355 & 2.256 & 1.377 \\
+PAL (Ours)   & \textbf{21.01} & \textbf{0.791} & \textbf{0.228} & \textbf{2.923}   & 1.729  & \textbf{22.13} & \textbf{0.890} & \textbf{0.117} & \textbf{2.870}   & \textbf{2.091}  & \textbf{22.32} & \textbf{0.816} & \textbf{0.348} & \textbf{2.535}   & \textbf{1.469}  \\
\midrule
Uformer~\citep{wang2022uformer}        & 18.85 & 0.751 & 0.288 & 2.751 & 1.661  & 21.50 & 0.884 & 0.120 & 2.919 & 2.058  & 19.80 & 0.714 & 0.346 & 2.135  & 1.387 \\
+PAL (Ours)    & \textbf{19.31} & \textbf{0.767} & \textbf{0.247} & \textbf{2.858}   & \textbf{1.666} & \textbf{21.79} & \textbf{0.890} & \textbf{0.112} & \textbf{2.952}  & \textbf{2.063}  & \textbf{20.12} & \textbf{0.738} & \textbf{0.327} & \textbf{2.780}   & \textbf{1.649}  \\
\midrule
Retinexformer~\citep{cai2023retinexformer}  & 23.40 & 0.822 & 0.269 & 3.148  & 1.980  & 25.48 & 0.930 & 0.101 & 2.404  & 2.096  & 21.69 & 0.846 & 0.276 & 3.163   & 1.962  \\
+PAL (Ours)    & \textbf{24.53} & \textbf{0.847} & \textbf{0.239} & \textbf{3.533} & \textbf{2.122}   & \textbf{26.01} & \textbf{0.941} & \textbf{0.083} & \textbf{2.435}  & \textbf{2.162}  & \textbf{22.73} & \textbf{0.864} & \textbf{0.265} & \textbf{3.499}  & \textbf{2.077}  \\
\midrule
CID-Net~\citep{yan2025hvi}        & 23.97 & 0.849 & 0.104 & 3.791  & 2.071  & 25.44 & 0.935 & 0.047 & 3.299  & 2.171 & 23.19 & 0.857 & 0.136 & 3.699  & 2.042  \\
+PAL (Ours)    & \textbf{24.13}  & \textbf{0.854} & \textbf{0.099} & \textbf{3.923}  & \textbf{2.104}  & \textbf{25.84} & \textbf{0.937} & \textbf{0.045} & \textbf{3.373}  & \textbf{2.185}  & \textbf{23.95} & \textbf{0.870} & \textbf{0.112} & \textbf{3.938}  & \textbf{2.103}  \\
\bottomrule
\end{tabular}
}

\vspace{-10pt}
\label{tab:comparison}
\end{table*}

\begin{table}[t]
\centering
\caption{Underwater enhancement results on EUVP.}
\label{tab:underwater}
\setlength{\tabcolsep}{30pt}
\resizebox{\linewidth}{!}{
\begin{tabular}{llccc}
\toprule
\multirow{2}{*}{Method} & \multirow{2}{*}{Loss} & \multicolumn{3}{c}{EUVP} \\
\cmidrule(lr){3-5}
 & & PSNR$\uparrow$ & SSIM$\uparrow$ & LPIPS$\downarrow$ \\
\midrule
\multirow{2}{*}{Shallow-UWnet} & Baseline & 19.70 & 0.780 & 0.355 \\
 & +PAL & \textbf{20.35} & \textbf{0.790} & \textbf{0.327} \\
\midrule
\multirow{2}{*}{Boths} & Baseline & 19.68 & 0.748 & \textbf{0.367} \\
 & +PAL & \textbf{19.85} & \textbf{0.767} & 0.375 \\
\midrule
\multirow{2}{*}{LiteEnhanceNet} & Baseline & 20.40 & 0.779 & 0.343 \\
 & +PAL & \textbf{20.97} & \textbf{0.787} & \textbf{0.328} \\
\bottomrule
\end{tabular}
}
\end{table}

\begin{table}[t]
\centering
\caption{Dehazing results on RESIDE-SOTS-Indoor.}
\label{tab:dehaze}
\setlength{\tabcolsep}{30pt}
\resizebox{\linewidth}{!}{
\begin{tabular}{llccc}
\toprule
\multirow{2}{*}{Method} & \multirow{2}{*}{Loss} & \multicolumn{3}{c}{RESIDE-SOTS-Indoor} \\
\cmidrule(lr){3-5}
 & & PSNR$\uparrow$ & SSIM$\uparrow$ & LPIPS$\downarrow$ \\
\midrule
\multirow{2}{*}{FocalNet} & Baseline & 37.97 & 0.989 & \textbf{0.012} \\
 & +PAL & \textbf{38.18} & \textbf{0.991} & \textbf{0.012} \\
\midrule
\multirow{2}{*}{MITNet} & Baseline & 37.56 & 0.988 & 0.006 \\
 & +PAL & \textbf{37.80} & \textbf{0.988} & \textbf{0.005} \\
\midrule
\multirow{2}{*}{DehazeXL} & Baseline & 27.77 & 0.956 & 0.030 \\
 & +PAL & \textbf{28.07} & \textbf{0.961} & \textbf{0.028} \\
\bottomrule
\end{tabular}
}
\end{table}

\begin{table}[t]
\centering
\caption{Nighttime dehazing results on NHR.}
\label{tab:nightdehaze}
\setlength{\tabcolsep}{30pt}
\resizebox{\linewidth}{!}{
\begin{tabular}{llccc}
\toprule
\multirow{2}{*}{Method} & \multirow{2}{*}{Loss} & \multicolumn{3}{c}{NHR} \\
\cmidrule(lr){3-5}
 & & PSNR$\uparrow$ & SSIM$\uparrow$ & LPIPS$\downarrow$ \\
\midrule
\multirow{2}{*}{NAFNet} & Baseline & 22.06 & 0.825 & 0.082 \\
 & +PAL & \textbf{22.91} & \textbf{0.838} & \textbf{0.071} \\
\midrule
\multirow{2}{*}{Restormer} & Baseline & 18.30 & 0.794 & 0.118 \\
 & +PAL & \textbf{18.89} & \textbf{0.816} & \textbf{0.103} \\
\bottomrule
\end{tabular}
}
\end{table}

\begin{table}[t]
\centering
\caption{Shadow removal results on ISTD.}
\label{tab:shadow}
\setlength{\tabcolsep}{50pt}
\resizebox{\linewidth}{!}{
\begin{tabular}{llccc}
\toprule
\multirow{2}{*}{Method} & \multirow{2}{*}{Loss} & \multicolumn{3}{c}{ISTD} \\
\cmidrule(lr){3-5}
 & & PSNR$\uparrow$ & SSIM$\uparrow$ & RMSE$\downarrow$ \\
\midrule
\multirow{2}{*}{RASM} & Baseline & 32.32 & \textbf{0.968} & \textbf{4.12} \\
 & +PAL & \textbf{32.65} & \textbf{0.968} & 4.16 \\
\midrule
\multirow{2}{*}{HomoFormer} & Baseline & 32.02 & \textbf{0.968} & 4.24 \\
 & +PAL & \textbf{32.49} & \textbf{0.968} & \textbf{4.17} \\
\bottomrule
\end{tabular}
}
\end{table}
\subsection{Photometric Alignment Loss (PAL)}
We model the photometric discrepancy between prediction and target as a global affine color transform, defined by a $3\times3$ matrix $\mathbf{C}$ and a $3\times1$ bias vector $\mathbf{b}$:
\begin{equation}
    \mathbf{I}_{\text{gt}} \approx \mathbf{C} \hat{\mathbf{I}} + \mathbf{b}.
\end{equation}
This model captures per-channel gains, cross-channel coupling, and additive color shifts. PAL computes the least-squares alignment that best explains this discrepancy, then measures the residual reconstruction error after alignment. In this way, PAL preserves supervision for content while reducing the influence of photometric mismatch that would dominate or corrupt pixel-wise training.

Our goal is to find the optimal parameters $(\mathbf{C^*}, \mathbf{b^*})$ that minimize the expected squared L2-norm of the residual:
\begin{equation}
    \mathcal{L}(\mathbf{C}, \mathbf{b}) = \mathbb{E}\left[ \|(\mathbf{C} \hat{\mathbf{I}} + \mathbf{b}) - \mathbf{I}_{\text{gt}}\|_2^2 \right].
\end{equation}
The standard solution from multivariate linear regression is $\mathbf{b^*} = \mu_\text{gt} - \mathbf{C^*} \mu_{\hat{\mathbf{I}}}$ and $\mathbf{C^*} = \text{Cov}(\mathbf{I}_{\text{gt}},
\hat{\mathbf{I}}) \text{Cov}(\hat{\mathbf{I}}, \hat{\mathbf{I}})^{-1}$. However, a practical issue arises when the prediction has low color variance (e.g., large monochromatic regions). In such cases, the covariance matrix $\text{Var}(\hat{\mathbf{I}})$ can become ill-conditioned or singular, making its inverse numerically unstable and leading to extreme values in $\mathbf{C^*}$.
To guarantee a stable solution, we incorporate Ridge Regression by adding an L2 regularization term. Consequently, the solution for the desired transformation matrix $\mathbf{C^*}$ becomes the following:
\begin{equation}
    \mathbf{C^*} = \text{Cov}(\mathbf{I}_{\text{gt}}, \hat{\mathbf{I}}) \left( \text{Cov}(\hat{\mathbf{I}}, \hat{\mathbf{I}}) + \epsilon \mathbf{E} \right)^{-1}.
    \label{eq:c_star_ridge}
\end{equation}
where $\epsilon$ is a small, positive hyperparameter that controls the regularization strength, and $\mathbf{E}$ is the $3\times3$ identity matrix. This regularization term ensures that the matrix to be inverted is always well-conditioned. The optimal bias remains:
\begin{equation}
    \mathbf{b^*} = \mu_{\text{gt}} - \mathbf{C^*} \mu_{\hat{\mathbf{I}}},
    \label{eq:b_star_ridge}
\end{equation}
where $\mu_{\hat{\mathbf{I}}} = \mathbb{E}[\hat{\mathbf{I}}]$ and $\mu_{\text{gt}} = \mathbb{E}[\mathbf{I}_{\text{gt}}]$. The covariance matrices are defined as $\text{Cov}(\hat{\mathbf{I}}, \hat{\mathbf{I}}) = \mathbb{E}[(\hat{\mathbf{I}} - \mu_{\hat{\mathbf{I}}})(\hat{\mathbf{I}} - \mu_{\hat{\mathbf{I}}})^\top]$ and $\text{Cov}(\mathbf{I}_{\text{gt}}, \hat{\mathbf{I}}) = \mathbb{E}[(\mathbf{I}_{\text{gt}} - \mu_{\text{gt}})(\hat{\mathbf{I}} - \mu_{\hat{\mathbf{I}}})^\top]$. All required statistics can be computed efficiently over training samples.

\begin{table*}[t]
\centering
\caption{All-in-one restoration results across multiple weather degradation benchmarks. }
\setlength{\tabcolsep}{2.9pt}
\resizebox{\linewidth}{!}{
\begin{tabular}{ll ccc ccc ccc ccc}
\toprule
\multirow{2}{*}{Method} & \multirow{2}{*}{Loss}
 & \multicolumn{3}{c}{Snow100K-S}
 & \multicolumn{3}{c}{Snow100K-L}
 & \multicolumn{3}{c}{Outdoor}
 & \multicolumn{3}{c}{RainDrop} \\
\cmidrule(lr){3-5} \cmidrule(lr){6-8} \cmidrule(lr){9-11} \cmidrule(lr){12-14}
 & & PSNR$\uparrow$ & SSIM$\uparrow$ & LPIPS$\downarrow$
   & PSNR$\uparrow$ & SSIM$\uparrow$ & LPIPS$\downarrow$
   & PSNR$\uparrow$ & SSIM$\uparrow$ & LPIPS$\downarrow$
   & PSNR$\uparrow$ & SSIM$\uparrow$ & LPIPS$\downarrow$ \\
\midrule
\multirow{2}{*}{Histoformer~\citep{sun2024restoring}}
 & Baseline & 37.41 & 0.965 & 0.045 & 32.16 & 0.926& 0.0919 & 32.08 & 0.938 & 0.077 & \textbf{33.06} & 0.944 &  0.067 \\
 & +PAL & \textbf{37.85} & \textbf{0.968} &  \textbf{0.039} & \textbf{32.34} &\textbf{0.929}  & \textbf{0.087} & \textbf{32.82}  & \textbf{0.945} & \textbf{0.071}  & 32.77 & \textbf{0.945} & \textbf{0.063} \\ 
\midrule
\multirow{2}{*}{MODEM~\citep{wang2025modem}}
 & Baseline & 38.08 & 0.967 & 0.041 & 32.52 & 0.929 & 0.088  & 33.10 & 0.941 & 0.070 & 33.01 & 0.943 &  0.065 \\
 & +PAL &  \textbf{38.10}&  \textbf{0.968} & \textbf{0.039} & \textbf{32.54} & \textbf{0.930} & \textbf{0.085}  & \textbf{33.25} & \textbf{0.942} & \textbf{0.065} & \textbf{33.08}  & \textbf{0.944} &  \textbf{0.060}  \\
\bottomrule
\end{tabular}
}
\vspace{-12pt}
\label{tab:allinone}
\end{table*}

\noindent\textbf{Integration into training.}
With the numerically stable optimal transformation $(\mathbf{C^*}, \mathbf{b^*})$, we define our Photometric Alignment Loss (PAL) as the minimum reconstruction error:
\begin{equation}
    \mathcal{L}_{\text{PAL}} = \|(\mathbf{C^*} \hat{\mathbf{I}} + \mathbf{b^*}) - \mathbf{I}_{\text{gt}}\|_.
\end{equation}
During training, it can be integrated with the existing loss $\mathcal{L}_{\text{pixel}}$:
\begin{equation}
    \mathcal{L}_{\text{total}} = \mathcal{L}_{\text{pixel}} + \alpha \mathcal{L}_{\text{PAL}}.
\end{equation}
Retaining $\mathcal{L}_{\text{pixel}}$ alongside $\mathcal{L}_{\text{PAL}}$ is deliberate: $\mathcal{L}_{\text{pixel}}$ preserves full pixel-level fidelity supervision, while $\mathcal{L}_{\text{PAL}}$ supplies a photometric-invariant gradient that emphasizes content restoration. 
Here, $\mathbf{C^*}$ and $\mathbf{b^*}$ are computed on-the-fly and then treated as constants (stop-gradient) for the backward pass; this is essential because, if gradients were allowed to flow through $\mathbf{C^*}$ and $\mathbf{b^*}$, the network could trivially minimize $\mathcal{L}_{\text{PAL}}$ without improving structural content, collapsing to degenerate solutions. $\alpha$ is a scalar hyperparameter that balances the pixel term and the alignment term. The compute only costs 0.0037 GFLOPs on a $256\times256$ image, on the order of 0.01\%–0.1\% of the backbone. PAL is therefore easy to integrate into existing paired low-level vision pipelines.

\begin{figure}[t]
    \centering
    \setlength{\tabcolsep}{0.5pt}
    \renewcommand{\arraystretch}{0.1}
    \begin{tabular}{cccc}
        \begin{subfigure}{0.245\linewidth}
            \centering
            \includegraphics[width=\linewidth]{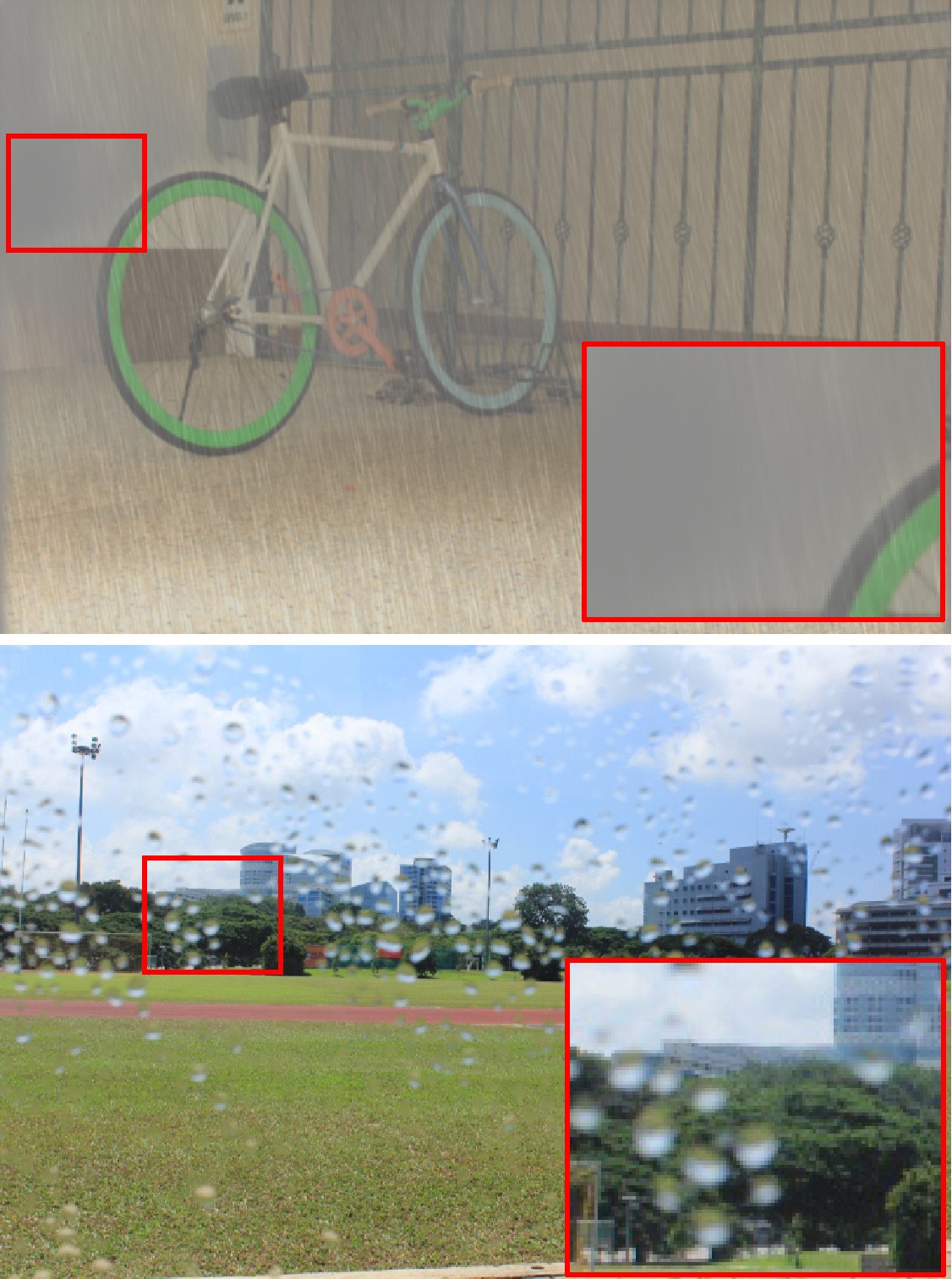}
        \end{subfigure}
        &
        \begin{subfigure}{0.245\linewidth}
            \centering
            \includegraphics[width=\linewidth]{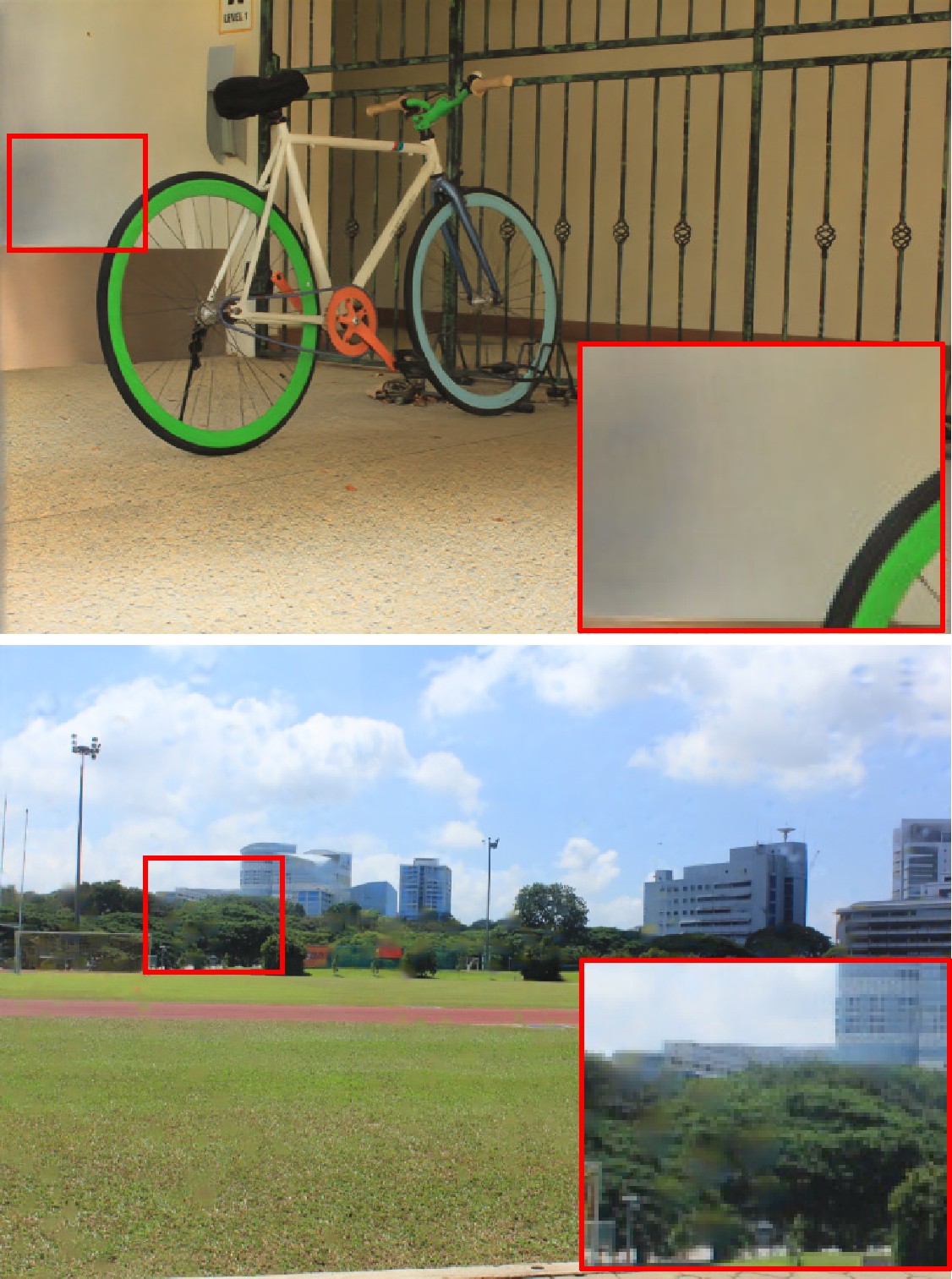}
        \end{subfigure}
        &
        \begin{subfigure}{0.245\linewidth}
            \centering
            \includegraphics[width=\linewidth]{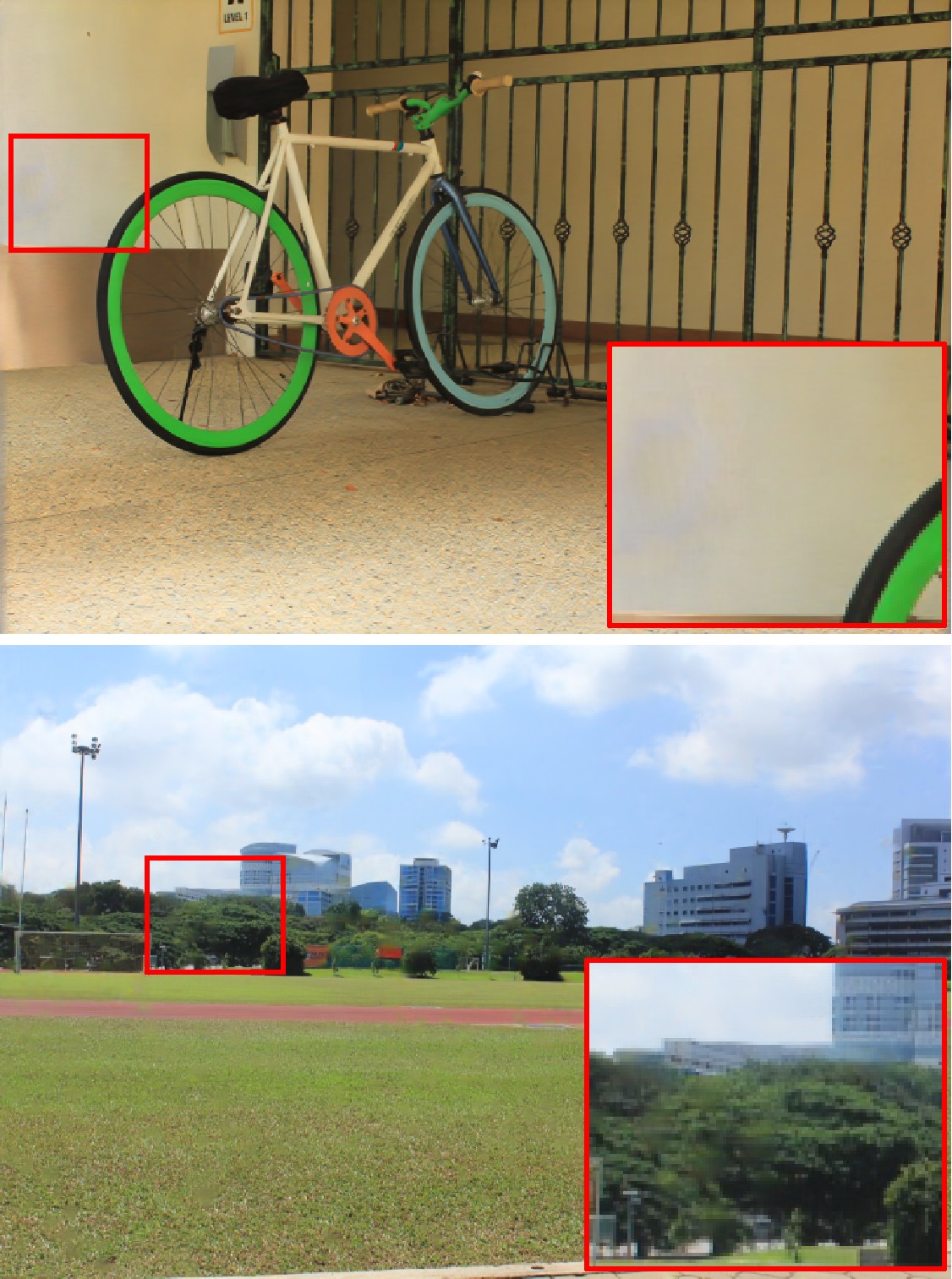}
        \end{subfigure}
        &
        \begin{subfigure}{0.245\linewidth}
            \centering
            \includegraphics[width=\linewidth]{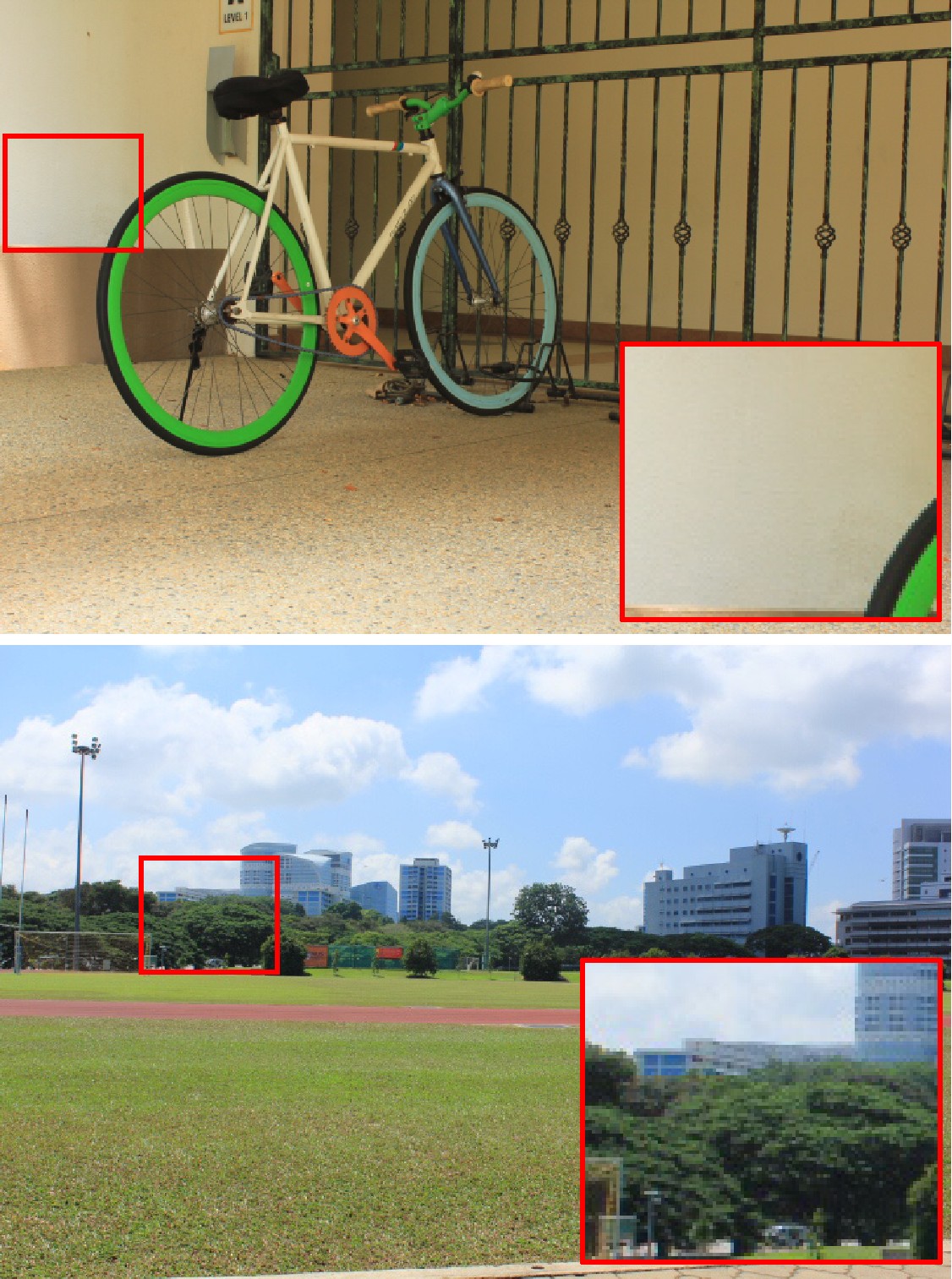}
        \end{subfigure}
        \\
        \addlinespace[2pt]
        Input & Histoformer & Histo.+PAL & GT \\
    \end{tabular}
    \caption{Qualitative comparison on the all-in-one task.}
    \vspace{-5pt}
    \label{fig:allinone_vis}
\end{figure}

\section{Experimental Validation}

We organize the experiments by task type. We first evaluate PAL on tasks where photometric inconsistency is intrinsic to the task, including low-light and underwater enhancement (Section~4.1), and then on tasks where it is induced by data acquisition, including dehazing, nighttime dehazing, and all-in-one restoration (Section~4.2). We further examine shadow removal as a hybrid extension where both sources coexist (Section~4.3), followed by ablation studies (Section~4.4). For all experiments, we keep the original settings of each baseline unchanged and introduce PAL as the only modification.

\subsection{Tasks with Intrinsic Photometric Transfer}

Our evaluation of the task-intrinsic source comes from two representative enhancement tasks: low-light image enhancement (LLIE) and underwater image enhancement (UIE). For the low-light task, we evaluate our method on the LOLv1~\citep{wei2018deep}, LOLv2-real~\citep{yang2021sparse}, and LOLv2-synthetic~\citep{yang2021sparse} datasets, using four backbone architectures spanning different design philosophies: MIRNet~\citep{zamir2020learning} (multi-scale residual), Uformer~\citep{wang2022uformer} (window-based transformer), Retinexformer~\citep{cai2023retinexformer} (Retinex-guided transformer), and HVI-CIDNet~\citep{yan2025hvi} (learnable color space). For the underwater task, we conduct experiments on the EUVP~\citep{islam2020fast} dataset and employ three task-specific backbones: Shallow-UWNet~\citep{naik2021shallow}, Boths~\citep{liu2022boths}, and LiteEnhanceNet~\citep{zhang2024liteenhancenet}. For quantitative evaluation across all paired datasets, we report PSNR, SSIM~\citep{wang2004image}, and LPIPS~\citep{zhang2018unreasonable}. We maintain the original training configurations for all backbones and integrate PAL as the only modification.

\noindent\textbf{Quantitative results.}
Table~\ref{tab:comparison} shows that PAL consistently improves all four backbones across the three LOL benchmarks on all five metrics (PSNR, SSIM, LPIPS, IQA, IAA). Notably, the improvements are not limited to photometric fidelity (PSNR): structural quality (SSIM, LPIPS) also improves, confirming that discounting the nuisance photometric component re-allocates gradient budget to restoration-relevant content. The gains are most pronounced for Retinexformer (+1.13~dB on LOLv1, +1.04~dB on LOLv2-real), which uses a Retinex decomposition that is particularly sensitive to photometric ambiguity. Table~\ref{tab:underwater} reports UIE results on EUVP. PAL improves all three backbones, with LiteEnhanceNet gaining +0.57~dB PSNR and Shallow-UWNet gaining +0.65~dB. This confirm that PAL addresses a general photometric inconsistency phenomenon rather than a dataset-specific artifact.

\noindent\textbf{Qualitative results.} We present visual comparisons on low-light enhancement in Figure~\ref{fig:llie_nightdehaze} and nighttime dehazing in Figure~\ref{fig:nightdehaze}. PAL produces results with less noise and a more pleasant color. As in Figure~\ref{fig:uie}, our UIE result is natural and free from artifact while the baseline method shows an unpleasant shift.

\subsection{Tasks with Acquisition-Induced Mismatch}
Dehazing, nighttime dehazing, and all-in-one weather restoration are representative of acquisition-induced mismatch. Here, the ground truth should ideally share the same photometric profile as the clean scene content, yet data collection under varying conditions introduces per-pair photometric shifts from differing sensor responses, lighting, and environmental scattering. For image dehazing, we evaluate on RESIDE-SOTS-Indoor~\citep{li2018benchmarking} using FocalNet~\citep{cui2023focal}, MITNet~\citep{shen2023mutual}, and DehazeXL~\citep{chen2025tokenize}. For nighttime dehazing, we evaluate on NHR~\citep{zhang2020nighttime} with NAFNet~\citep{chen2022simple} and Restormer~\citep{zamir2022restormer}. For all-weather restoration, we evaluate on Snow100K-S, Snow100K-L, Outdoor, and RainDrop using Histoformer~\citep{sun2024restoring} and MODEM~\citep{wang2025modem}. 

\noindent\textbf{Quantitative results.}
Table~\ref{tab:dehaze} shows that PAL improves all three dehazing backbones on RESIDE-SOTS-Indoor. 
Table~\ref{tab:nightdehaze} reports nighttime dehazing results on NHR. The improvements are substantial. NAFNet gains +0.85~dB PSNR and Restormer gains +0.59~dB, with corresponding LPIPS improvements. Nighttime conditions amplify photometric inconsistency through spatially non-uniform artificial lighting and color-shifted scattering, making PAL's explicit alignment beneficial. Table~\ref{tab:allinone} presents all-weather restoration results. PAL improves both two models across most benchmarks. This is notable because they is trained on data pooled from multiple degradation, each collected under different imaging conditions with its own photometric profile. The inter-dataset photometric inconsistency compounds the per-pair inconsistency, yet PAL handles both.

\noindent\textbf{Qualitative results.}
Figure~\ref{fig:dehaze} presents visual comparisons on dehazing, where PAL reduces residual color cast. Figure~\ref{fig:allinone_vis} shows an all-in-one restoration example on deraining, where PAL produces cleaner outputs with fewer color artifacts.

\begin{table*}[t]
\centering
\setlength{\tabcolsep}{9.7pt}
\resizebox{\linewidth}{!}{
\begin{tabular}{lccccccccccc}
\toprule
\multirow{3}{*}{Setup} & \multicolumn{6}{c}{The choice of $\alpha$.} & \multicolumn{5}{c}{The choice of $\epsilon$.} \\
\cmidrule(r){2-7} \cmidrule(l){8-12}
        & 0.1 & 0.5 & 0.6 & 0.8 & 1.0 & 1.2 & 0.0001 & 0.001 & 0.01 & 0.1 & 1    \\
\midrule
PSNR $\uparrow$                 & 23.07 & 23.37 & 23.95 & 23.41 & 23.19  & 23.45 & NaN & 23.41 & 23.08 & 23.17 & 23.12 \\
SSIM $\uparrow$                 & 0.822 & 0.840 & 0.870  & 0.861 & 0.857 & 0.831 & NaN & 0.861 & 0.842 & 0.850 & 0.834 \\
\bottomrule
\end{tabular}
}
\caption{Ablation analysis of the weight $\alpha$ for our PAL and the regularization term $\epsilon$ for matrix inversion.}
\vspace{-5pt}
\label{tab:ablation}
\end{table*}

\begin{table*}[t]
\centering
\setlength{\tabcolsep}{9pt}
\resizebox{\linewidth}{!}{
\begin{tabular}{l*{5}{cc}}
\toprule
\multirow{2}{*}{Method} &
\multicolumn{2}{c}{DICM} &
\multicolumn{2}{c}{LIME} &
\multicolumn{2}{c}{MEF} &
\multicolumn{2}{c}{NPE} &
\multicolumn{2}{c}{VV} \\
& IQA$\uparrow$ & IAA$\uparrow$
& IQA$\uparrow$ & IAA$\uparrow$
& IQA$\uparrow$ & IAA$\uparrow$
& IQA$\uparrow$ & IAA$\uparrow$
& IQA$\uparrow$ & IAA$\uparrow$ \\
\midrule
MIRNet~\citep{zamir2020learning}   & 2.680   & 1.892  & 2.409  & 1.793  & 2.174  & 1.557  & 2.193  & 1.710 & 2.580 & 1.669 \\
+PAL (Ours) &\textbf{2.875}  & \textbf{2.034} &\textbf{2.633}  & \textbf{1.939} & \textbf{2.404} & \textbf{1.705} &\textbf{2.496}  &\textbf{1.854}  & \textbf{2.680} & \textbf{1.740} \\
\midrule
Uformer~\citep{wang2022uformer}& 2.548  & 1.852 & 2.499  & 1.986  & 2.308 & 2.017 & 2.435 & 1.832 & 2.567 & 1.710  \\
+PAL (Ours)&\textbf{2.640}  & \textbf{1.870} & \textbf{2.604} &\textbf{2.026}  & \textbf{2.341} &\textbf{2.080} &\textbf{2.699}  &\textbf{1.875} &\textbf{2.612}  & \textbf{1.712} \\
\midrule
Retinexformer~\citep{cai2023retinexformer} & 2.869 & 2.321  & 3.005  & 2.528 & 3.193 & 2.520  & 2.785  & 2.175 & 2.464 & 1.923  \\
+PAL (Ours)       & \textbf{2.907} &\textbf{2.391}  & \textbf{3.243} &\textbf{2.668}  &\textbf{3.343} & \textbf{2.586} &\textbf{2.877}  & \textbf{2.227} &\textbf{2.668}  &\textbf{2.039}  \\
\midrule
CID-Net~\citep{yan2025hvi} & 3.464 & 2.340   & 3.286   & 2.439  & 3.497  & 2.482   & 3.061  & 2.064   & 3.086 & 1.927    \\
+PAL (Ours)       & \textbf{3.632} &\textbf{2.449}  & \textbf{3.433} &\textbf{2.549}  & \textbf{3.474} & \textbf{2.448} &\textbf{3.434}  & \textbf{2.339} &\textbf{3.174}   &\textbf{1.995}  \\
\bottomrule
\end{tabular}
}

\caption{Quantitative results on unpaired datasets across IQA ($\uparrow$) and IAA ($\uparrow$), evaluated using Q-Align.}
\vspace{-10pt}
\label{tab:gen}
\end{table*}

\subsection{Hybrid Case: Shadow Removal}

Shadow removal provides a case in which both sources of per-pair photometric inconsistency coexist within a single image. Inside shadow regions, the model must learn to undo the illumination change. Outside shadow regions, residual photometric deviation is acquisition-induced, as the paired shadow and shadow-free images are captured under slightly different conditions. Because these two regions undergo fundamentally different photometric shifts, a single global affine fit would conflate them. We therefore extend the PAL to a masked version, since the shadow mask is already a standard input to existing pipelines~\citep{mei2024latent, GuoHLCW23}.  We calculate the photometric matrix inside and outside the mask as a spatial partition. We evaluate on the ISTD~\citep{WangL018} dataset with RASM~\citep{liu2025rasm} and HomoFormer~\citep{Xiao_2024_CVPR}. In Table~\ref{tab:shadow}, PAL improves PSNR for both methods (+0.33~dB for RASM, +0.47~dB for HomoFormer) with comparable SSIM and RMSE. 

\subsection{Ablations and Discussion}

We conduct ablation studies on the LOLv2-real dataset using HVI-CIDNet as the backbone to analyze the impact of the two key hyperparameters $\alpha$ and $\epsilon$. We further examine cross-dataset generalization on unseen unpaired low-light datasets to understand whether PAL improves robustness beyond the training distribution.

\noindent\textbf{Effect of Weight $\alpha$.}
We first study the influence of the weighting factor $\alpha$ for PAL, with results shown in Table~\ref{tab:ablation}. As $\alpha$ increases from 0.1 to 0.6, performance steadily improves, indicating that the alignment term provides a useful complementary signal to the pixel-wise loss. The best performance is achieved at $\alpha = 0.6$, with a PSNR of 23.95 dB and an SSIM of 0.870. When $\alpha$ is increased further ($\alpha \ge 0.8$), performance begins to decline, suggesting that over-emphasizing alignment can interfere with the learning of fine-grained restoration details. We thus set $\alpha$ to 0.6 for enhancement tasks, while we empirically set $\alpha$ to 0.8 for the restoration task to further discount photometric discrepancy that shouldn't be there.

\noindent\textbf{Effect of Regularization Term $\epsilon$.}
Next, we analyze the regularization term $\epsilon$ (Table~\ref{tab:ablation}). Setting $\epsilon$ to a very small value (0.0001) resulted in `NaN` losses during training, confirming that regularization is necessary when the input has low color variance, especially early in training. As $\epsilon$ increases, performance degrades because a larger $\epsilon$ biases the transformation matrix $C^*$ toward a scaled identity and reduces its ability to model color correlations. Our experiments show that $\epsilon = 0.001$ provides the best trade-off, so we apply it to all of our experiments. Please note that the images are in [0, 1].

\noindent\textbf{Cross-dataset generalization.}
To assess whether PAL also improves generalization rather than overfitting, we evaluate LOL-trained models on unseen low-light datasets: DICM~\citep{lee2013contrast}, LIME~\citep{DBLP:journals/tip/GuoLL17}, MEF~\citep{ma2015perceptual}, NPE~\citep{wang2013naturalness}, and VV~\citep{vonikakis2018evaluation}. Since they do not provide paired ground truth, we report non-reference quality assessment (IQA) and aesthetic assessment (IAA) scores computed by Q-Align~\citep{wu2023q}, following recent works~\citep{yan2025hvi}. As in Table~\ref{tab:gen}, PAL consistently improves both IQA and IAA across all 4 backbones and datasets. This indicates that PAL reduces overfitting to the photometric profile of the training set and leads to outputs with more natural color and better perceptual quality on out-of-distribution data.

\section{Conclusion}

Paired low-level vision tasks suffer from per-pair photometric inconsistency: different image pairs demand different global photometric mappings, whether because photometric transfer is intrinsic to the task or because data acquisition introduces unintended shifts. We showed that this produces a unified optimization pathology in which standard pixel-wise losses allocate disproportionate gradient budget to conflicting photometric targets, with severity determined by the magnitude of inconsistency and the dataset size. To address this, we proposed PAL, which models photometric discrepancy with a closed-form color alignment before measuring reconstruction residuals. PAL is flexible, computationally negligible, and easy to integrate into existing pipelines. Across experiments covering 16 datasets, 6 tasks, and 16 methods on enhancement, restoration, and hybrid tasks, PAL consistently improves fidelity metrics and cross-dataset generalization. These findings highlight the importance of explicitly accounting for photometric inconsistency in paired supervision and suggest a promising direction for designing more robust objectives in low-level vision.

\section*{Acknowledgement}
The authors would like to express their gratitude to TPU Research Cloud (TRC) for computational resources. 

\bibliography{main}
\bibliographystyle{tmlr}

\clearpage

\appendix

\section{Limitations and Future Work}

PAL models the photometric discrepancy as a global affine color transformation ($\mathbf{C}\hat{\mathbf{I}}+\mathbf{b}$, 12 parameters), which cannot explicitly capture spatially varying photometric effects such as local illumination. However, this is a deliberate design choice: a global model avoids absorbing spatially localized content (textures, edges) into the alignment, which would undermine restoration supervision. Our all-weather restoration experiments (Table~6 of the main paper), which include both global and localized degradations, empirically confirm that PAL does not interfere with localized restoration. A promising future direction is to explore patch-wise or spatially adaptive affine partitions that can handle local photometric variation while retaining the closed-form efficiency.

Real camera pipelines involve non-linear operations such as gamma correction and tone mapping. PAL's affine model provides a first-order approximation to these transformations. While this is a simplification, it is well justified: in the neighborhood of the operating point, most smooth non-linear color transforms are well approximated by their local tangent (affine) map. Moreover, the affine model strikes an effective balance between expressiveness and robustness, as it captures the dominant modes of photometric variation (gain, bias, cross-channel coupling) without risking overfitting to image content. Our extensive experiments across 16 datasets with diverse imaging pipelines demonstrate that this approximation is practically sufficient. Extending PAL to higher-order models (e.g., polynomial color transforms) is a natural future direction, though care must be taken to prevent the alignment from absorbing content-relevant signal.

\section{Scope and Applicability}

We discuss the scope of tasks and scenarios where PAL is expected to be most beneficial, as well as cases where its impact is limited.

\noindent\textbf{Tasks with significant per-pair photometric inconsistency.}
PAL provides the largest improvements when the training data exhibits substantial per-pair variation in global brightness, color, or white balance. This includes: (1)~enhancement tasks where photometric transfer is intrinsic (low-light enhancement, underwater image enhancement), (2)~restoration tasks where acquisition mismatch introduces spurious photometric shifts (dehazing, deraining), and (3)~multi-dataset training (all-in-one restoration) where different constituent datasets have distinct photometric profiles. In all these cases, the per-pair photometric component dominates the gradient energy (high $\rho$ in Eq.~(4) of the main paper), and PAL effectively redirects the gradient budget toward structural content.

\noindent\textbf{Tasks with minimal photometric shift.}
For tasks such as image super-resolution and Gaussian denoising, the ground truth and input share nearly identical photometric profiles by construction. In these cases, the affine alignment converges to identity, and PAL provides marginal or no improvement because there is little photometric nuisance to discount.  PAL operates in the RGB pixel space and modifies only the reconstruction loss. It is therefore fully compatible with, and complementary to, perceptual losses~\citep{johnson2016perceptual}, adversarial losses~\citep{goodfellow2014generative}, and frequency-domain losses. PAL removes the photometric nuisance from the pixel-level supervision, while these complementary objectives provide additional constraints on perceptual quality or texture fidelity. 
\section{Discussion: PAL vs. GT-Mean}

GT-Mean~\citep{liao2025gtmean,zhang2019kindling} is an alignment technique tailored for low-light enhancement that aligns the global brightness of the prediction to the ground truth via a single scalar ratio before computing the pixel-wise loss or metrics. In this section, we provide a self-contained, formal comparison between PAL and GT-Mean to clarify their relationship in full detail.

\subsection{Explicit Formulations}

We first state both formulations explicitly. Let $\hat{\mathbf{I}} \in \mathbb{R}^{3 \times N}$ denote the predicted image and $\mathbf{I}_{\text{gt}} \in \mathbb{R}^{3 \times N}$ denote the ground truth, where $N = H \times W$ is the number of pixels. Each column $\hat{\mathbf{x}}_i, \mathbf{y}_i \in \mathbb{R}^3$ is the RGB vector of the $i$-th pixel.

\subsubsection{GT-Mean Loss~\citep{liao2025gtmean}}

GT-Mean computes a \emph{single scalar} gain from the ratio of the global means of the ground truth and prediction:
\begin{equation}
    c_{\text{GM}} = \frac{\mu(\mathbf{I}_{\text{gt}})}{\mu(\hat{\mathbf{I}})},
    \quad\text{where}\quad
    \mu(\mathbf{A}) = \frac{1}{3N}\sum_{c=1}^{3}\sum_{i=1}^{N} A_{c,i},
    \label{eq:gtmean_scalar}
\end{equation}
i.e., $\mu(\cdot)$ averages over \emph{all} pixels \emph{and} all three color channels jointly, producing a single number. The aligned prediction and the GT-Mean loss are then:
\begin{equation}
    \hat{\mathbf{I}}_{\text{GM}} = c_{\text{GM}} \cdot \hat{\mathbf{I}},
    \qquad
    \mathcal{L}_{\text{GT-Mean}} = \bigl\| \hat{\mathbf{I}}_{\text{GM}} - \mathbf{I}_{\text{gt}} \bigr\|.
    \label{eq:gtmean_loss}
\end{equation}

\noindent In matrix form, this is equivalent to applying the alignment transform $\hat{\mathbf{x}}_i \mapsto c_{\text{GM}}\, \mathbf{E}\, \hat{\mathbf{x}}_i + \mathbf{0}$, where $\mathbf{E}$ is the $3{\times}3$ identity matrix:
\begin{equation}
    \underbrace{\mathbf{C}_{\text{GM}}}_{3\times 3} = c_{\text{GM}} \begin{pmatrix} 1 & 0 & 0 \\ 0 & 1 & 0 \\ 0 & 0 & 1 \end{pmatrix}, \qquad
    \underbrace{\mathbf{b}_{\text{GM}}}_{3\times 1} = \begin{pmatrix} 0 \\ 0 \\ 0 \end{pmatrix}.
    \label{eq:gtmean_matrix}
\end{equation}
Thus, GT-Mean has \textbf{1 free parameter} ($c_{\text{GM}}$): it applies the \emph{same} multiplicative factor to every pixel and every color channel, with \emph{no} additive bias.

\subsubsection{Photometric Alignment Loss (Ours)}

PAL models the photometric discrepancy as a \emph{full affine color transformation} with a $3{\times}3$ matrix $\mathbf{C}$ and a $3{\times}1$ bias vector $\mathbf{b}$:
\begin{equation}
    \mathbf{y}_i \approx \mathbf{C}\,\hat{\mathbf{x}}_i + \mathbf{b},
    \qquad i = 1, \dots, N.
    \label{eq:pal_model}
\end{equation}
The optimal parameters are obtained by ridge-regularized least squares (derivation in Section~\ref{sec:derivation} of this supplement):
\begin{equation}
    \mathbf{C}^{*} = \mathrm{Cov}(\mathbf{I}_{\text{gt}},\hat{\mathbf{I}})\bigl(\mathrm{Cov}(\hat{\mathbf{I}},\hat{\mathbf{I}})+\epsilon\,\mathbf{E}\bigr)^{-1},
    \qquad
    \mathbf{b}^{*} = \mu_{\text{gt}} - \mathbf{C}^{*}\mu_{\hat{\mathbf{I}}},
    \label{eq:pal_solution_supp}
\end{equation}
where $\mu_{\hat{\mathbf{I}}}, \mu_{\text{gt}} \in \mathbb{R}^3$ are the \emph{per-channel} means (unlike GT-Mean's scalar mean). The aligned prediction and the PAL loss are then:
\begin{equation}
    \hat{\mathbf{I}}_{\text{PAL}} = \mathbf{C}^{*}\hat{\mathbf{I}} + \mathbf{b}^{*},
    \qquad
    \mathcal{L}_{\text{PAL}} = \bigl\| \hat{\mathbf{I}}_{\text{PAL}} - \mathbf{I}_{\text{gt}} \bigr\|.
    \label{eq:pal_loss_supp}
\end{equation}
In full matrix form, $\mathbf{C}^{*}$ has \textbf{9 free parameters} (including off-diagonal entries that capture cross-channel coupling) and $\mathbf{b}^{*}$ has \textbf{3 free parameters} (additive per-channel biases), totaling \textbf{12 free parameters}:
\begin{equation}
    \underbrace{\mathbf{C}^{*}}_{3\times 3} = \begin{pmatrix} c_{rr} & c_{rg} & c_{rb} \\ c_{gr} & c_{gg} & c_{gb} \\ c_{br} & c_{bg} & c_{bb} \end{pmatrix}, \qquad
    \underbrace{\mathbf{b}^{*}}_{3\times 1} = \begin{pmatrix} b_r \\ b_g \\ b_b \end{pmatrix}.
    \label{eq:pal_matrix}
\end{equation}

\subsection{What GT-Mean Cannot Capture}

Since GT-Mean applies a single scalar to all channels identically, it cannot model any phenomenon where the three color channels behave differently:

\begin{itemize}[leftmargin=*]
    \item \textbf{Per-channel gain differences.} Exposure and sensor response often affect R, G, B channels non-uniformly. GT-Mean's scalar applies the same correction to all three channels ($c_{rr}{=}c_{gg}{=}c_{bb}{=}c_{\text{GM}}$), leaving per-channel gain discrepancies unresolved.
    \item \textbf{White-balance shifts.} These introduce off-diagonal terms in $\mathbf{C}$ (e.g., $c_{rb} \neq 0$ for a warm-to-cool shift). GT-Mean's $\mathbf{C}_{\text{GM}}$ has \emph{all zeros} off the diagonal, so it cannot model any cross-channel coupling.
    \item \textbf{Additive color biases.} Black-level offsets or ambient light require $\mathbf{b} \neq \mathbf{0}$. GT-Mean is purely multiplicative ($\mathbf{b}_{\text{GM}} = \mathbf{0}$) and cannot capture additive shifts.
    \item \textbf{Color-temperature variations.} These combine both per-channel multiplicative and cross-channel effects. As shown in Figure~2 of the main paper, GT-Mean leaves substantial color residuals in such cases, while PAL's full affine model closely recovers the reference color.
\end{itemize}

\subsection{Estimator Bias of GT-Mean}
From the statistical view, GT-Mean's ratio-of-means estimator $c_{\text{GM}} = \mu(\mathbf{I}_{\text{gt}}) / \mu(\hat{\mathbf{I}})$ is a biased estimator even for the best-fit scalar gain. The least-squares optimal scalar $c^* = \mathbb{E}[\hat{\mathbf{I}} \cdot \mathbf{I}_{\text{gt}}] / \mathbb{E}[\hat{\mathbf{I}}^2]$ differs from $c_{\text{GM}}$ unless the pixel intensities are uncorrelated, which is often violated for natural images.

\subsection{Task Generalizability}
GT-Mean was tailored for the low-light image enhancement domain, where the dominant photometric shift is an overall brightness difference. While effective in this context, it fails for tasks with more complex color discrepancy, for instance, underwater enhancement, as these tasks exhibit color-dependent and coupled photometric shifts that GT-Mean's scalar model cannot address.

\noindent\textbf{Empirical comparison on low-light enhancement.}
To provide a direct head-to-head comparison, we train all four LLIE backbones under three configurations: Baseline, +GT-Mean Loss~\citep{liao2025gtmean}, and +PAL (Ours), with all other settings identical. Results are shown in Table~\ref{tab:pal_vs_gtmean}.

\begin{table*}[t]
\centering
\setlength{\tabcolsep}{2pt}
\resizebox{\linewidth}{!}{
\begin{tabular}{l ccccc ccccc ccccc}
\toprule
\multirow{2}{*}{Methods} &
  \multicolumn{5}{c}{LOLv1} &
  \multicolumn{5}{c}{LOLv2-syn} &
  \multicolumn{5}{c}{LOLv2-real} \\
\cmidrule(lr){2-6} \cmidrule(lr){7-11} \cmidrule(lr){12-16}
 &
  PSNR & SSIM & LPIPS & IQA & IAA &
  PSNR & SSIM & LPIPS & IQA & IAA &
  PSNR & SSIM & LPIPS & IQA & IAA \\
\midrule
MIRNet~\citep{zamir2020learning}  & 20.57 & 0.769 & 0.254 & \underline{2.914}  & \textbf{1.855}  & 21.74 & 0.877 & 0.138 &  2.650  & 2.023 & 21.28 & 0.791 & \underline{0.355} & 2.256 & \underline{1.377} \\
+GT-Mean Loss  & \underline{20.79} & \underline{0.787} & \underline{0.233} & 2.892  & 1.728  & \underline{21.92} & \underline{0.888} & \underline{0.118} & \underline{2.840}    & \underline{2.084}  & \underline{21.77} & \underline{0.800} & 0.365 & \underline{2.285}  & 1.331 \\
+PAL (Ours)   & \textbf{21.01} & \textbf{0.791} & \textbf{0.228} &   \textbf{2.923}   & \underline{1.729}  & \textbf{22.13} & \textbf{0.890} & \textbf{0.117} & \textbf{2.870}   & \textbf{2.091}  & \textbf{22.32} & \textbf{0.816} & \textbf{0.348} & \textbf{2.535}   & \textbf{1.469}  \\
\midrule
Uformer~\citep{wang2022uformer}   & 18.85 & 0.751 & 0.288 & \underline{2.751} & \underline{1.661}  & 21.50 & 0.884 & 0.120 & 2.919 & 2.058  & 19.80 & 0.714 & 0.346 & 2.135  & 1.387 \\
+GT-Mean Loss  & \underline{19.01} & \underline{0.759} & \underline{0.279} & 2.728  & 1.558  & \underline{21.62} & \textbf{0.894} & \textbf{0.107} & \textbf{2.996}  & \textbf{2.104}  & \underline{19.92} & \underline{0.729} & \underline{0.331} & \underline{2.154}  & \underline{1.416}  \\
+PAL (Ours)    & \textbf{19.31} & \textbf{0.767} & \textbf{0.247} & \textbf{2.858}   & \textbf{1.666} & \textbf{21.79} & \underline{0.890} & \underline{0.112} & \underline{2.952}  & \underline{2.063}  & \textbf{20.12} & \textbf{0.738} & \textbf{0.327} & \textbf{2.780}   & \textbf{1.649}  \\
\midrule
Retinexformer~\citep{cai2023retinexformer}  & 23.40 & 0.822 & 0.269 & 3.148  & 1.980  & 25.48 & 0.930 & \underline{0.101} & 2.404  & 2.096  & 21.69 & 0.846 & 0.276 & 3.163   & 1.962  \\
+GT-Mean Loss  & \underline{24.03} & \underline{0.842} & \underline{0.240} & \underline{3.412}  & \underline{2.094}  & \underline{25.87} & \underline{0.940} & \textbf{0.083} & \textbf{2.438}  & \textbf{2.189}  & \underline{21.87} & \underline{0.849} & \textbf{0.253} & \underline{3.398}  & \underline{2.057}      \\
+PAL (Ours)    & \textbf{24.53} & \textbf{0.847} & \textbf{0.239} & \textbf{3.533} & \textbf{2.122}   & \textbf{26.01} & \textbf{0.941} & \textbf{0.083} & \underline{2.435}  & \underline{2.162}  & \textbf{22.73} & \textbf{0.864} & \underline{0.265} & \textbf{3.499}  & \textbf{2.077}  \\
\midrule
CID-Net~\citep{yan2025hvi}   & \underline{23.97} & \underline{0.849} & \underline{0.104} & 3.791  & 2.071  & 25.44 & 0.935 & \underline{0.047} & 3.299  & 2.171 & 23.19 & 0.857 & 0.136 & 3.699  & 2.042  \\
+GT-Mean Loss  & 23.72 & 0.838 & 0.105 & \textbf{3.993}  & \textbf{2.133}  & \underline{25.53} & \underline{0.936} & \textbf{0.045} & \textbf{3.410}  & \textbf{2.198}  & \underline{23.39} & \underline{0.863} & \underline{0.120} & \underline{3.917}  & \underline{2.089}  \\
+PAL (Ours)    & \textbf{24.13}  & \textbf{0.854} & \textbf{0.099} & \underline{3.923}  & \underline{2.104}  & \textbf{25.84} & \textbf{0.937} & \textbf{0.045} & \underline{3.373}  & \underline{2.185}  & \textbf{23.95} & \textbf{0.870} & \textbf{0.112} & \textbf{3.938}  & \textbf{2.103}  \\
\bottomrule
\end{tabular}
}
\caption{Direct comparison of Baseline, GT-Mean Loss~\citep{liao2025gtmean}, and PAL (Ours) on LOL datasets. Best and second-best results are in \textbf{bold} and \underline{underlined}. PAL achieves the best PSNR and SSIM across all backbones and datasets, demonstrating that the full affine alignment consistently outperforms mean-based alignment even on GT-Mean's home domain (LLIE).}
\label{tab:pal_vs_gtmean}
\end{table*}

Even on low-light enhancement, GT-Mean's home domain, PAL consistently outperforms GT-Mean across all four backbones on the primary fidelity metrics (PSNR, SSIM). The improvements are particularly notable on LOLv2-real, where the photometric inconsistency is strongest: PAL outperforms GT-Mean by +0.55~dB (MIRNet), +0.20~dB (Uformer), +0.86~dB (Retinexformer), and +0.56~dB (CID-Net) in PSNR, with corresponding SSIM gains. On CID-Net, GT-Mean actually \emph{degrades} PSNR on LOLv1 relative to the baseline (23.72 vs. 23.97~dB), likely because its additive correction interferes with CID-Net's learnable color space. PAL, by contrast, improves CID-Net across all three datasets. These results confirm that PAL's additional modeling capacity (multiplicative gains and cross-channel coupling) translates into measurable improvements even in the specific domain for which GT-Mean was designed.

\section{Implementation of PAL}

We provide a PyTorch implementation of PAL below. The core computation is minimal and introduces no learnable parameters.

\begin{lstlisting}[style=mypython]
def pal_loss(pred, gt, alpha=0.6, eps=1e-3):
    """Photometric Alignment Loss.
    Args: pred, gt: (B,3,H,W) in [0,1].
    Returns: scalar loss."""
    B, C, H, W = pred.shape
    # Flatten to (B, N, 3) pixels
    P = pred.permute(0,2,3,1).reshape(B,-1,3)
    T = gt.permute(0,2,3,1).reshape(B,-1,3)
    # Design matrix X = [P, 1] -> (B, N, 4)
    X = torch.cat([P, P.new_ones(B,P.shape[1],1)], -1)
    # Ridge regression: W = (XtX + eps*I)^{-1} XtT
    XtX = X.transpose(1,2) @ X          # (B,4,4)
    XtT = X.transpose(1,2) @ T          # (B,4,3)
    I4  = torch.eye(4, device=pred.device).unsqueeze(0)
    W   = torch.linalg.solve(XtX + eps * I4, XtT)
    M   = W.transpose(1,2)              # (B,3,4)
    # Apply alignment (stop-gradient on M)
    M = M.detach()
    Xf = torch.cat([pred.reshape(B,3,-1),
                    pred.new_ones(B,1,H*W)], 1)
    aligned = (M @ Xf).reshape(B, 3, H, W)
    # Combined loss
    return alpha * F.l1_loss(aligned, gt)
\end{lstlisting}

\section{Extended Theoretical Analysis}
\subsection{Derivation of the Closed-Form Alignment}
\label{sec:derivation}

\subsubsection{Problem Formulation}
Let $\mathbf{I} \in \mathbb{R}^{3 \times N}$ denote the predicted image and $\mathbf{I}_{\text{gt}} \in \mathbb{R}^{3 \times N}$ denote the ground truth reference image, where $N$ represents the total number of pixels (flattened spatial dimensions). Each column $\mathbf{x}_i$ of $\mathbf{I}$ and $\mathbf{y}_i$ of $\mathbf{I}_{\text{gt}}$ represents the RGB vector of the $i$-th pixel.

We model the photometric relationship as an affine transformation $\mathbf{y}_i \approx \mathbf{C}\mathbf{x}_i + \mathbf{b}$, where $\mathbf{C} \in \mathbb{R}^{3 \times 3}$ is the linear transformation matrix capturing exposure and color coupling, and $\mathbf{b} \in \mathbb{R}^{3 \times 1}$ is the bias capturing global offsets. To ensure numerical stability and prevent overfitting to monochromatic regions (where the color covariance would be singular), we employ Ridge Regression ($\ell_2$ regularization) on the transformation matrix $\mathbf{C}$. Our objective function is to minimize the regularized Mean Squared Error:

\begin{equation}
    \mathcal{J}(\mathbf{C}, \mathbf{b}) = \sum_{i=1}^{N} \| (\mathbf{C}\mathbf{x}_i + \mathbf{b}) - \mathbf{y}_i \|_2^2 + \lambda \|\mathbf{C}\|_F^2,
    \label{eq:objective}
\end{equation}
where $\|\cdot\|_F$ denotes the Frobenius norm and $\lambda$ is the regularization coefficient.

\subsection{Optimal Bias}
First, we solve for the optimal bias $\mathbf{b}^*$ by taking the partial derivative of Eq.~\ref{eq:objective} with respect to $\mathbf{b}$ and setting it to zero:

\begin{equation}
    \frac{\partial \mathcal{J}}{\partial \mathbf{b}} = \sum_{i=1}^{N} 2(\mathbf{C}\mathbf{x}_i + \mathbf{b} - \mathbf{y}_i) = 0.
\end{equation}

\noindent  Rearranging the terms, we obtain:
\begin{equation}
    \sum_{i=1}^{N} \mathbf{y}_i = \mathbf{C} \sum_{i=1}^{N} \mathbf{x}_i + \sum_{i=1}^{N} \mathbf{b}.
\end{equation}

\noindent  Dividing by $N$, we express the relationship in terms of the expected values (means) of the images, denoted as $\mu_{\mathbf{I}} = \frac{1}{N}\sum \mathbf{x}_i$ and $\mu_{\text{gt}} = \frac{1}{N}\sum \mathbf{y}_i$:
\begin{equation}
    \mu_{\text{gt}} = \mathbf{C} \mu_{\mathbf{I}} + \mathbf{b}.
\end{equation}

\noindent Thus, the optimal bias is determined by the alignment of the centroids:
\begin{equation}
    \mathbf{b}^* = \mu_{\text{gt}} - \mathbf{C}\mu_{\mathbf{I}}.
    \label{eq:optimal_b}
\end{equation}

\subsubsection{Optimal Transformation Matrix}
Substituting $\mathbf{b}^*$ back into the objective function eliminates $\mathbf{b}$ and centers the data. Let $\bar{\mathbf{x}}_i = \mathbf{x}_i - \mu_{\mathbf{I}}$ and $\bar{\mathbf{y}}_i = \mathbf{y}_i - \mu_{\text{gt}}$ be the mean-centered pixels. The objective function simplifies to:

\begin{equation}
    \mathcal{J}(\mathbf{C}) = \sum_{i=1}^{N} \| \mathbf{C}\bar{\mathbf{x}}_i - \bar{\mathbf{y}}_i \|_2^2 + \lambda \|\mathbf{C}\|_F^2.
\end{equation}

\noindent We can express this in matrix notation. Let $\bar{\mathbf{I}} \in \mathbb{R}^{3 \times N}$ and $\bar{\mathbf{I}}_{\text{gt}} \in \mathbb{R}^{3 \times N}$ be the matrices of centered pixels. The objective becomes:
\begin{equation}
    \mathcal{J}(\mathbf{C}) = \| \mathbf{C}\bar{\mathbf{I}} - \bar{\mathbf{I}}_{\text{gt}} \|_F^2 + \lambda \|\mathbf{C}\|_F^2.
\end{equation}

\noindent Using the trace properties of the Frobenius norm ($\| \mathbf{A} \|_F^2 = \text{Tr}(\mathbf{A}^\top \mathbf{A})$), we expand the term:
\begin{equation}
\begin{aligned}
    \mathcal{J}(\mathbf{C}) &= \text{Tr}\left( (\mathbf{C}\bar{\mathbf{I}} - \bar{\mathbf{I}}_{\text{gt}})^\top (\mathbf{C}\bar{\mathbf{I}} - \bar{\mathbf{I}}_{\text{gt}}) \right) + \lambda \text{Tr}(\mathbf{C}^\top \mathbf{C}) \\
    &= \text{Tr}\left( \bar{\mathbf{I}}^\top \mathbf{C}^\top \mathbf{C} \bar{\mathbf{I}} - \bar{\mathbf{I}}^\top \mathbf{C}^\top \bar{\mathbf{I}}_{\text{gt}} - \bar{\mathbf{I}}_{\text{gt}}^\top \mathbf{C} \bar{\mathbf{I}} + \bar{\mathbf{I}}_{\text{gt}}^\top \bar{\mathbf{I}}_{\text{gt}} \right) \\
    & + \lambda \text{Tr}(\mathbf{C}^\top \mathbf{C}).
\end{aligned}
\end{equation}

\noindent Taking the derivative with respect to $\mathbf{C}$ and setting it to zero:
\begin{equation}
    \frac{\partial \mathcal{J}}{\partial \mathbf{C}} = 2\mathbf{C}\bar{\mathbf{I}}\bar{\mathbf{I}}^\top - 2\bar{\mathbf{I}}_{\text{gt}}\bar{\mathbf{I}}^\top + 2\lambda \mathbf{C} = 0.
\end{equation}

\noindent Rearranging to solve for $\mathbf{C}$:
\begin{equation}
    \mathbf{C}(\bar{\mathbf{I}}\bar{\mathbf{I}}^\top + \lambda \mathbf{E}) = \bar{\mathbf{I}}_{\text{gt}}\bar{\mathbf{I}}^\top,
\end{equation}
where $\mathbf{E}$ is the $3 \times 3$ identity matrix.

  We recognize that the term $\frac{1}{N}\bar{\mathbf{I}}\bar{\mathbf{I}}^\top$ is the covariance matrix of the predicted image, $\text{Cov}(\mathbf{I}, \mathbf{I})$, and $\frac{1}{N}\bar{\mathbf{I}}_{\text{gt}}\bar{\mathbf{I}}^\top$ is the cross-covariance matrix, $\text{Cov}(\mathbf{I}_{\text{gt}}, \mathbf{I})$. Dividing the equation by $N$ and letting $\epsilon = \lambda/N$, we arrive at the final closed-form solution:

\begin{equation}
    \mathbf{C}^* = \text{Cov}(\mathbf{I}_{\text{gt}}, \mathbf{I}) \left( \text{Cov}(\mathbf{I}, \mathbf{I}) + \epsilon \mathbf{E} \right)^{-1}.
    \label{eq:c_star_ridge_supp}
\end{equation}

\noindent This matches Eq.~(6) in the main paper. The term $\epsilon \mathbf{E}$ ensures that the matrix inverse exists and is numerically stable even when the input image $\mathbf{I}$ has low color variance (rank-deficient covariance).

\subsubsection{Cross-Term Under Ridge Regularization}

Proposition~1 in the main paper establishes exact orthogonal decomposition of the MSE loss under unregularized least-squares alignment. In practice, we use ridge regularization ($\epsilon > 0$) for numerical stability. This modifies the first-order optimality conditions for $\mathbf{C}^*$ (while the bias optimality is unchanged since $\mathbf{b}$ is not regularized):
\begin{equation}
    \textstyle\sum_{i}\boldsymbol{\Delta}_{s}^{(i)} = \mathbf{0}, \qquad
    \textstyle\sum_{i}\boldsymbol{\Delta}_{s}^{(i)}\,\hat{\mathbf{I}}^{(i)\!\top} = \lambda\,\mathbf{C}^{*},
    \label{eq:ridge_opt_cond}
\end{equation}
where $\lambda = N\epsilon$ is the unnormalized regularization term. Compared to the unregularized case (Eq.~(3) of the main paper, where the right-hand side is $\mathbf{0}$), the structural residual is no longer exactly orthogonal to the prediction. Expanding the cross-term:
\begin{align}
    \textstyle\sum_{i}\langle\boldsymbol{\Delta}_{p}^{(i)},\boldsymbol{\Delta}_{s}^{(i)}\rangle
    &= \mathrm{tr}\!\bigl[(\mathbf{C}^{*}\!-\!\mathbf{E})^{\!\top}\!\textstyle\sum_{i}\boldsymbol{\Delta}_{s}^{(i)}\hat{\mathbf{I}}^{(i)\!\top}\bigr]
       + \mathbf{b}^{*\!\top}\!\textstyle\sum_{i}\boldsymbol{\Delta}_{s}^{(i)} \nonumber\\
    &= \mathrm{tr}\!\bigl[(\mathbf{C}^{*}\!-\!\mathbf{E})^{\!\top}\!\cdot\lambda\,\mathbf{C}^{*}\bigr] + 0 \nonumber\\
    &= \lambda\bigl(\|\mathbf{C}^{*}\|_{F}^{2} - \mathrm{tr}(\mathbf{C}^{*})\bigr).
    \label{eq:ridge_cross}
\end{align}
The MSE therefore decomposes as:
\begin{equation}
    \textstyle\sum_{i}\bigl\|\mathbf{I}_{\text{gt}}^{(i)}-\hat{\mathbf{I}}^{(i)}\bigr\|^{2}
    = \textstyle\sum_{i}\bigl\|\boldsymbol{\Delta}_{p}^{(i)}\bigr\|^{2}
    + \textstyle\sum_{i}\bigl\|\boldsymbol{\Delta}_{s}^{(i)}\bigr\|^{2}
    + \lambda\bigl(\|\mathbf{C}^{*}\|_{F}^{2} - \mathrm{tr}(\mathbf{C}^{*})\bigr).
    \label{eq:ridge_decomp}
\end{equation}

 The cross-term is proportional to $\lambda$ and vanishes continuously. In our implementation, the regularization is applied to the unnormalized Gram matrix $\bar{\mathbf{I}}\bar{\mathbf{I}}^{\top}$ with $\lambda = 0.001$ (equivalently, $\epsilon = \lambda/N$ in the covariance form of Eq.~\ref{eq:c_star_ridge_supp}), so the cross-term is $O(\lambda) = O(10^{-3})$, negligible relative to the photometric and structural energies that scale as $O(N)$.
As a result, the approximate decomposition holds, and the gradient dominance analysis from the main paper remains valid. The regularization bias also has a well-understood effect on the alignment itself. Ridge shrinks each eigendirection of $\mathbf{C}^{*}$ by a factor of $\lambda_k / (\lambda_k + \epsilon)$, where $\lambda_k$ are the eigenvalues of $\mathrm{Cov}(\hat{\mathbf{I}}, \hat{\mathbf{I}})$. For well-conditioned images where $\lambda_k \gg \epsilon$, this shrinkage is negligible; for ill-conditioned cases (near-monochromatic regions), the shrinkage prevents the degenerate solutions that would arise from inverting a singular covariance.

\subsubsection{Gradient Dominance Under $\ell_1$ Loss}

The orthogonal decomposition in Proposition~1 of the main paper relies on the quadratic structure of the $\ell_2$ norm. The $\ell_1$ loss $\mathcal{L}_{L1} = \sum_{i}\|\mathbf{I}_{\text{gt}}^{(i)} - \hat{\mathbf{I}}^{(i)}\|_{1}$ does not admit an analogous exact decomposition, since in general $\|a+b\|_1 \neq \|a\|_1 + \|b\|_1$. However, we show that the gradient dominance pathology persists, and in fact is \emph{more severe}, under $\ell_1$.

\noindent\textbf{Gradient of $\ell_1$.}
The per-pixel, per-channel gradient of the $\ell_1$ loss is:
\begin{equation}
    \frac{\partial}{\partial \hat{I}_{c}^{(i)}} \bigl|I_{\text{gt},c}^{(i)} - \hat{I}_{c}^{(i)}\bigr| = -\mathrm{sign}\!\bigl(\Delta_{p,c}^{(i)} + \Delta_{s,c}^{(i)}\bigr),
    \label{eq:L1_grad}
\end{equation}
where $c$ indexes the color channel. Unlike $\ell_2$, where the gradient magnitude is proportional to the residual, the $\ell_1$ gradient has \emph{unit magnitude} at every pixel regardless of the error size. The only information carried by the gradient is its \emph{sign}, which is determined by whichever component, photometric or structural, has a larger absolute value at that pixel-channel.

\noindent\textbf{Photometric dominance of gradient direction.}
As established in the main paper, $\boldsymbol{\Delta}_{p}^{(i)}$ is spatially \emph{dense} (non-zero at every pixel, since it is a global affine function of the prediction), while $\boldsymbol{\Delta}_{s}^{(i)}$ is spatially \emph{sparse} (concentrated on edges, textures, and fine structures). For the majority of pixels in smooth regions:
\begin{equation}
    |\Delta_{p,c}^{(i)}| \gg |\Delta_{s,c}^{(i)}| \;\implies\; 
    \mathrm{sign}\!\bigl(\Delta_{p,c}^{(i)} + \Delta_{s,c}^{(i)}\bigr) = \mathrm{sign}\!\bigl(\Delta_{p,c}^{(i)}\bigr).
    \label{eq:sign_dom}
\end{equation}
Hence, the $\ell_1$ gradient direction is determined by the photometric component for most pixels, and only the sparse minority where structural error exceeds photometric error contributes a content-relevant gradient.

\noindent\textbf{$\ell_1$ amplifies the pathology.}
Under $\ell_2$, a photometric mismatch of magnitude $\delta$ at $N$ pixels produces a total gradient energy of $O(N\delta^2)$; a structural mismatch of magnitude $\Delta$ at $M$ pixels produces $O(M\Delta^2)$. The gradient energy ratio is $N\delta^2 / (M\Delta^2)$. Under $\ell_1$, however, both components produce unit-magnitude gradients, so the ratio is simply $N_p / N_s$, where $N_p$ is the number of pixels where photometric error dominates the sign, and $N_s = N - N_p$ is the complement. Since $N_p \gg N_s$ (photometric error is dense), \emph{the $\ell_1$ gradient direction is dominated by the photometric component even when its magnitude is smaller}. This analysis demonstrates that PAL is well-motivated under $\ell_1$ training as under $\ell_2$. In our experiments, we apply PAL when the baseline model is trained with $\ell_1$ losses. This empirically confirms the phenomenon.

\section{Additional Per-Pair Photometric Analysis}

To complement the per-pair scatter plots in Figure~2 of the main paper (LOLv2-Real and RESIDE-SOTS), we provide scatter plots for nine datasets used for training in our experiments in Figure~\ref{fig:dataset_analysis}. Each subplot shows the per-channel (R, G, B) mean intensity of the input versus the ground truth for every image pair in the training set, along with per-channel linear fits. If the photometric mapping were consistent and identity-preserving, all points would lie on the gray diagonal ($y = x$). Deviations from this diagonal, \emph{scatter} around the fitted lines, and \emph{separation} between the per-channel fits jointly quantify the severity of photometric inconsistency. For each pair (to reduce the number of points, for datasets with more than 1000 images, we plot the first 1000 images ) in the training set, we compute the spatial mean of each color channel independently, yielding a 3-dimensional summary $(\bar{r}, \bar{g}, \bar{b})$ for both the input and the ground truth.  Each channel is then plotted as a separate point (red, green, blue) in the input-mean vs.\ ground-truth-mean plane, so a dataset of $n$ pairs produces $3n$ points.  To visualize the density of overlapping points, we overlay Gaussian kernel density estimation (KDE) contours for each channel.

\noindent\textbf{Low-light enhancement (LOL-V2 Real, LOL-V2 Syn, LOL-V1).}
All three LLIE datasets exhibit the most pronounced inconsistency. The input means cluster near zero (underexposed), while the ground-truth means span a wide range, producing large deviations from the $y{=}x$ diagonal. This is the canonical example of \emph{task-intrinsic} photometric inconsistency identified in Section~3.1 of the main paper: different pairs demand different brightness and color-temperature mappings depending on capture conditions and photographer intent. The wide per-pair scatter around each regression line means that the pixel-wise loss receives conflicting photometric supervision across pairs. Moreover, the per-channel regression lines have visibly different slopes, confirming that the inconsistency is not a uniform brightness shift but a channel-dependent color transformation.

\noindent\textbf{Shadow removal (ISTD), Underwater enhancement (EUVP), and Image dehazing (RESIDE-SOTS).}
These three datasets illustrate how photometric inconsistency manifests across different task families with varying characteristics.
For ISTD, points cluster near the diagonal with ground-truth means slightly above input means, consistent with shadow-free images being brighter. The per-channel regression lines diverge in slope, say, the B channel line is notably steeper than R and G, confirming that shadow attenuation is wavelength-dependent~\citep{shadowhack} and introduces channel-coupled biases beyond a uniform darkening. This per-pair, per-channel variation creates the same conflicting supervision identified in the main paper.
For EUVP, R-channel points are systematically shifted below G and B, reflecting the selective attenuation of red wavelengths in underwater imaging~\citep{8253820}. The per-channel regression lines are clearly separated with different slopes, and the substantial scatter around each line confirms per-pair variability. This channel-coupled behavior exemplifies \emph{task-intrinsic} inconsistency where physical degradation is inherently wavelength-dependent.
For RESIDE-SOTS, points lie below the diagonal (hazy inputs appear brighter than clean ground truth due to additive atmospheric scattering), and the regression lines exhibit non-zero intercepts. These \emph{acquisition-induced} offsets inject conflicting supervision into the pixel-wise loss. 

\noindent\textbf{All-weather restoration (Snow, Rain+Haze, Raindrop).}
The three all-weather subsets display qualitatively different photometric profiles: Snow training data shows tight clustering near the diagonal with moderate scatter since they are synthetic; Rain+Haze exhibits broader scatter with clear channel separation; Raindrop clusters tightly near the diagonal since raindrops cause localized rather than global photometric changes. When combined for all-in-one training, the network receives three distinct photometric profiles simultaneously, amplifying the conflicting supervision. As noted in the main paper (Section~2.1), multi-dataset training compounds the inconsistency because each constituent dataset has its own photometric profile, making the per-pair variation even wider.

\begin{figure}[t]
    \centering
    \includegraphics[width=\linewidth]{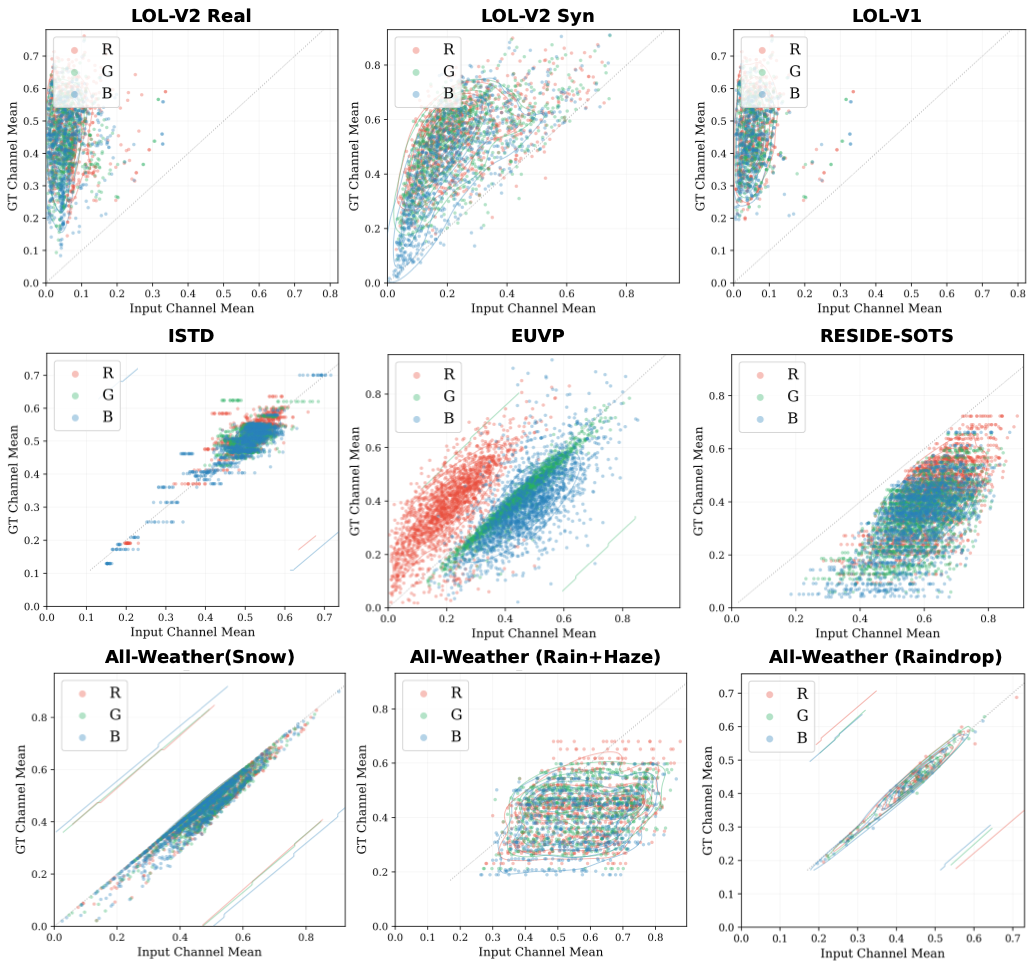}
    \caption{Per-pair photometric analysis across nine datasets spanning four task families. Each plot shows the per-channel (R/G/B) mean intensity of the input vs.\ ground truth for every training pair, with per-channel linear fits and KDE density contours. The gray dashed line denotes $y{=}x$ (identity). All datasets exhibit per-pair scatter away from any single trajectory, confirming the ubiquity of per-pair photometric inconsistency that injects conflicting supervision into pixel-wise losses (cf.\ Section~3.1 of the main paper). The separation between per-channel regression lines further demonstrates that the inconsistency is channel-dependent, requiring a full affine color model to discount.}
    \label{fig:dataset_analysis}
\end{figure}

\end{document}